\newlength{\widebarargwidth}
\newlength{\widebarargheight}
\newlength{\widebarargdepth}
\long\def\@makecaption#1#2{
        \vskip 0.8ex
        \setbox\@tempboxa\hbox{\small {\bf #1:} #2}
        \parindent 1.5em  
        \dimen0=\hsize
        \advance\dimen0 by -3em
        \ifdim \wd\@tempboxa >\dimen0
                \hbox to \hsize{
                        \parindent 0em
                        \hfil 
                        \parbox{\dimen0}{\def\baselinestretch{0.96}\small
                                {\bf #1.} #2
                                } 
                        \hfil}
        \else \hbox to \hsize{\hfil \box\@tempboxa \hfil}
        \fi
        }
\newtheorem{theorem}{Theorem}
\newtheorem{lemma}[theorem]{Lemma}
\newtheorem{corollary}[theorem]{Corollary}
\newtheorem{definition}[theorem]{Definition}
\newtheorem{example}[theorem]{Example}
\renewcommand{\baselinestretch}{1.04} 
\date{}
\newcommand{\sign}{\mathrm{sign}}
\newcommand{\ty}{\tilde{y}}
\newcommand{\tx}{\tilde{x}}
\newcommand{\cC}{\mathcal{C}}
\newcommand{\cN}{\mathcal{N}}
\newcommand{\cS}{{\cal S}}
\newcommand{\R}{\mathbb{R}}
\newcommand{\argmin}{\operatornamewithlimits{argmin}}
\newcommand{\E}{\mathbb{E}}
\renewcommand{\Pr}{\mathbb{P}}
\newcommand{\lv}{\lVert}
\newcommand{\rv}{\rVert}
\newcommand{\vt}{v^{(t)}}
\newcommand{\vtone}{v^{(t+1)}}
\newcommand{\vzero}{v^{(0)}}
\newcommand{\At}{A_t}
\newcommand{\Atmax}{A_t^{\text{max}}}
\newcommand{\Atone}{A_{t+1}}
\newcommand{\Gt}{G_{t}}
\newcommand{\Ht}{H_{t}}
\newcommand{\Rt}{R(\vt)}
\newcommand{\Sample}{\mathcal{S}}
\newcommand{\sfQ}{\mathsf{Q}}
\newcommand{\sfP}{\mathsf{P}}
\newcommand{\tsfP}{\tilde{\mathsf{P}}}
\newcommand{\sfN}{\mathsf{N}}
\title{\textbf{Finite-sample Analysis 
       of Interpolating Linear Classifiers 
       in the Overparameterized Regime }}
\author{Niladri S. Chatterji \\ University of California, Berkeley \\ \textsf{chatterji@berkeley.edu} \and Philip M. Long \\ Google \\ \textsf{plong@google.com}}
\date{\today}
\begin{document}
\maketitle

\begin{abstract}
We prove bounds on the population risk of the maximum margin algorithm
for two-class linear classification.  For linearly separable training
data, the maximum margin algorithm has been shown in previous work to
be equivalent to a limit of training with logistic loss using gradient
descent, as the training error is driven to zero.  We analyze this
algorithm applied to random data including misclassification noise.
Our assumptions on the clean data include the case in which the
class-conditional distributions are standard normal distributions.
The misclassification noise may be chosen by an adversary, subject to a
limit on the fraction of corrupted labels.  Our bounds show that, with
sufficient over-parameterization, 
the maximum margin algorithm trained on
noisy data can achieve nearly optimal population risk.
\end{abstract}

\section{Introduction}
A surprising statistical phenomenon 
has emerged in modern machine
learning:
highly complex models can interpolate 
%
training data while still generalizing well to test
data, even in the presence of label noise. 
This is rather striking as it the goes against the grain of the
classical statistical wisdom which dictates that predictors that
generalize well should 
trade off
between the fit to the training data
and the 
some measure of the complexity or smoothness
of the predictor. 
Many estimators like
neural networks,
kernel estimators, nearest neighbour
estimators, and even linear models have been shown to demonstrate this
phenomenon~\citep[see,][among others]{zhang2016understanding,belkin2019reconciling}.

This phenomenon
has recently
inspired
intense theoretical research.
One line of work 
\citep{soudry2018implicit,ji2018risk,gunasekar2017implicit,nacson2018stochastic,gunasekar2018implicit,gunasekar2018characterizing} 
formalized the argument \citep{neyshabur2014search,neyshabur2017implicit}
that, even when there is no explicit regularization that is used in training these rich models, 
there is nevertheless implicit regularization encoded in the choice of
the optimization method used.  For example, in the setting of linear
classification,
\citet{soudry2018implicit}, \citet{ji2018risk} and \citet{nacson2018stochastic} 
show that learning a linear classifier using gradient descent on the
unregularized logistic or exponential loss asymptotically leads the
solution to converge to the maximum $\ell_2$-margin classifier.
More concretely, given $n$ linearly separable samples $(x_i,y_i)_{j=1}^n$, where $x_i \in \mathbb{R}^p$ are the features and $y_i \in \{-1,1\}$, the iterates of gradient descent (initialized at the origin) are given by, 
\begin{align*}
\vtone := \vt -\alpha \nabla R_{\log}(\vt) \; \text{ where } \; R_{\log}(v):= \sum_{i=1}^n \log\left(1+\exp\left(-y_i (v\cdot x_i) \right)\right).
\end{align*}
They show that in the large-$t$ limit the normalized predictor obtained by gradient descent $\vt/\lv \vt\rv$ converges to $w/\lv w\rv$ where,
\begin{align} \label{def:maxmargin}
&w 
 =
 \argmin_{u \in \mathbb{R}^p} \; \lv u\rv,\\
\quad \text{such that, } \; &y_i (u\cdot x_i) \ge 1, \quad \text{for all } i\in [n]. \nonumber
\end{align}
That is, $w$ is the maximum $\ell_2$-margin classifier over the training data. 

The question still remains, though, why do these maximum margin classifiers
generalize well beyond the training set, 
despite the fact that they ``fit the noise''?
The fact that
$p > n$ renders traditional distribution-free
bounds~\citep{cover1965geometrical,vapnik1982estimation} 
vacuous.  Due to the presence of label noise,
margin bounds 
\citep{vapnik1995nature,shawe1998structural} are
also not an obvious answer.  

%


\begin{figure}[t]
\centering
\includegraphics[scale=0.8]{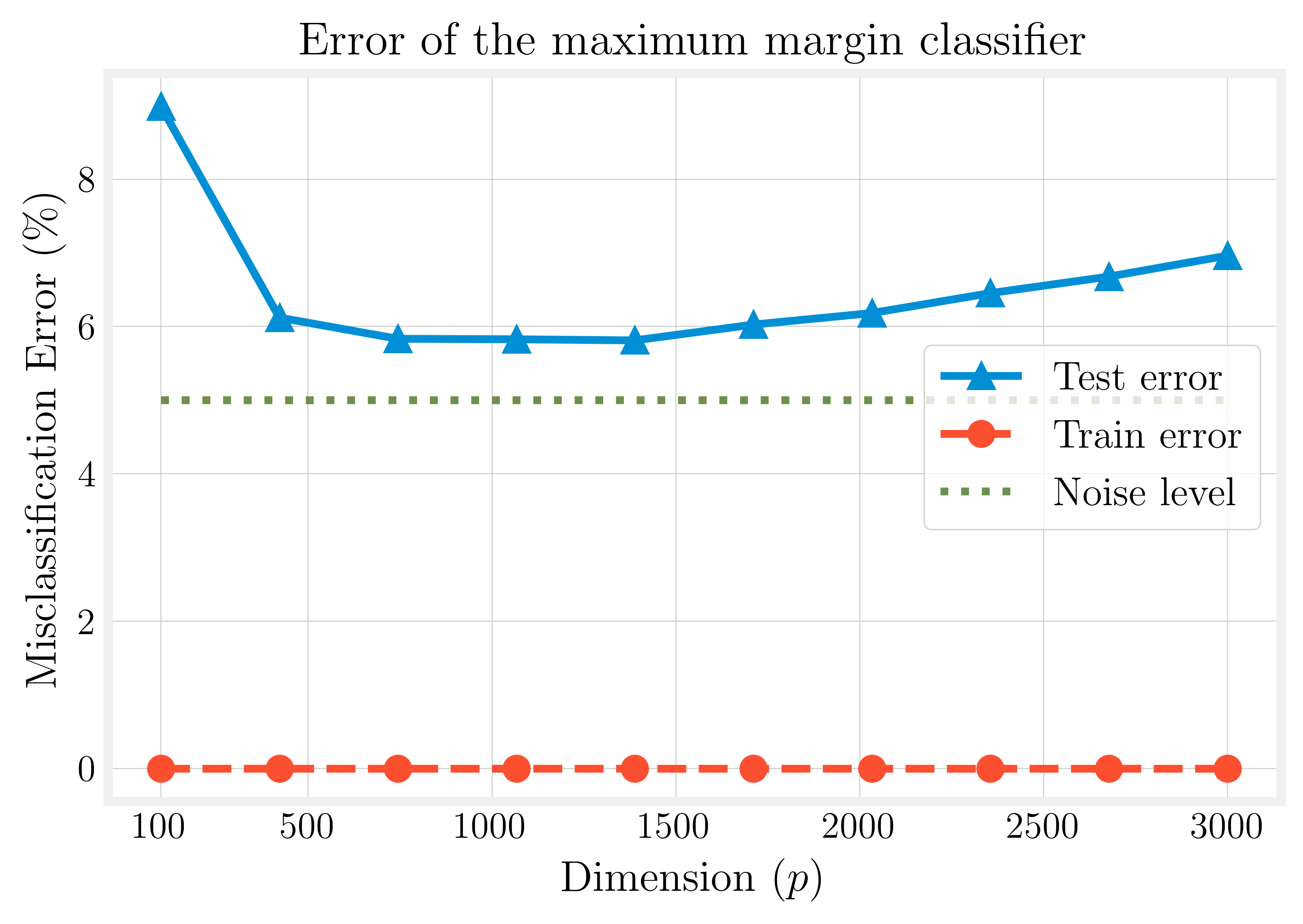}
\caption{\label{fig:introfigure}Plot of the test error (solid, blue) and train error (dashed, red) versus the dimension of the covariates $p$. The number of samples $n=100$ is kept fixed. The dimension $p$ is varied in the interval $[100,3000]$. The data is generated according to the Boolean noisy rare-weak model (see Section~\ref{s:defs}). First, a clean label $\ty$ is drawn by randomly flipping a fair coin. The covariates $x$ are drawn conditioned on $\ty$. The first $100$ attributes, $(x_1,\ldots,x_{100})$ are equal to the clean label $\ty$ with probability $0.7$, the remaining attributes $(x_{101},\ldots,x_{p})$ are either $-1$ or $1$ with equal probability. The noisy label sample $y$ is generated by flipping the true label $\ty$ with probability $\eta = 0.05$. The classifier is the maximum $\ell_2$-margin classifier defined in equation~\eqref{def:maxmargin}. The plot is generated by averaging over $500$ draws of the samples. The train error on all runs was always $0$.}
\end{figure}

In this paper, 
we prove an upper bound on the misclassification test error for the
maximum margin linear classifier, and therefore on
the 
the limit of gradient descent on the training error
without any complexity penalty.
Our analysis holds under a natural
and fairly general
generative model for the data.  
One special case is where adversarial label noise
\citep[see][]{kearns1994toward,kalai2008agnostically,
klivans2009learning,awasthi2017power,DBLP:conf/nips/Talwar20}
is added to data in which
the
positive examples are distributed as
$\mathsf{N}(\mu, I)$ and the negative examples
are distributed $\mathsf{N}(-\mu, I)$.  If $\lv \mu \rv$ is not too small,
the clean data will consist of overlapping but largely separate
clouds of points.  Our assumptions are 
weaker than this, however (see Section~\ref{s:defs} for the
details).
They
are satisfied by the case in which misclassification noise
is added to 
the generative model underlying Fisher's linear discriminant
\citep[see][]{duda2012pattern,hastie2009elements}
(except that, to make the analysis cleaner, the distribution
is shifted so that the origin is halfway between the class-conditional
means).
They also include as special 
cases the rare-weak model~\citep{donoho2008higher,jin2009impossibility}
and a Boolean variant~\citep{helmbold2012necessity}.
We study the overparameterized regime,
when the dimension $p$ is significantly greater than the
number $n$ of samples. For a precise statement of our main result see
Theorem~\ref{t:main}.  After its statement, we give examples
of its consequences, including cases in which $s$ of the $p$
variables are relevant, but weakly associated with the
class designations.  
In some cases where $s$, $p$ and $n$
are polynomially related, the risk of the maximum-margin
algorithm approaches the Bayes-optimal risk as
$e^{-n^{\tau}}$, for $\tau > 0$.  
%



Analysis of classification is hindered by the fact that, in contrast to
regression, there is no known simple expression for the parameters as
a function of the training data.  Our analysis leverages recent
results, mentioned above, that characterize the weight vector obtained
by minimizing the logistic loss on training data
\citep{soudry2018implicit,ji2018risk,nacson2018stochastic}.  We use
this result not only to motivate the analysis of the maximum margin
algorithm, but also in our proofs, to get a handle on the relationship
of this solution to the training data.  When learning in the presence
of label noise, algorithms that minimize a convex loss face the hazard
that mislabeled examples can exert an outsized influence.  However, we show that in the over-parameterized regime this effect is
ameliorated.  In particular, we show that the ratio between the
(exponential) losses of any two examples is bounded above by an
absolute constant.  
One special case of our upper bounds is where
there are relatively few relevant variables, and many irrelevant
variables.  In this case, classification using only parameters that
correctly classify the clean examples with a large margin leads to
large loss on examples with noisy class labels.  
However, the training
process can use the parameters on irrelevant variables to play a role
akin to slack variables, allowing the algorithm to correctly classify
incorrectly labeled training examples with limited harm on independent
test data.  On the other hand, if there are too many
irrelevant variables, accurate classification is impossible
\citep{jin2009impossibility}.  Our bounds reflect this reality---if
the number of irrelevant variables increases while the number and
quality of the relevant variables remains fixed, ultimately 
our bounds degrade.

In simulation experiments, we see a decrease in population risk with
the increase of $p$ beyond $n$, as observed in previous double-descent
papers, but this is followed by an increase.  As mentioned above,
\citet{jin2009impossibility} showed that, under certain conditions, if
the number $p$ of attributes and the number $s$ of relevant attributes
satisfy $p \geq s^2$, then, in a sense, learning is impossible.  Our
experiments suggest that interpolation with logistic loss can succeed
close to this boundary, despite the lack of explicit regularization or
feature selection.

\subsection{Related Work} A number of recent papers have focused on bounding the asymptotic error of overparameterized decision rules. \citet{hastie2019surprises} and \citet{muthukumar2020harmless} studied the asymptotic squared error of the interpolating ordinary least squares estimator for the problem of overparameterized linear regression. This was followed by \citet{mei2019generalization} who characterize the asymptotic error of the OLS estimator in the random features model. As we do, \citet{montanari2019generalization} studied linear classification,
calculating a formula for the asymptotic test error of the maximum
margin classifier in the overparameterized setting when the features
are generated from a Gaussian distribution and the labels are
generated from a logistic link function. This was followed by the work of \citet{liang2020precise} who calculated a formula for the asymptotic test error of the maximum $\ell_1$-margin classifier in the same setting. \citet{DBLP:conf/icassp/DengKT20} independently obtained related
results, including analysis of the case where the marginal distribution over
the covariates is a mixture of Gaussians, one for each class.
Previously, \cite{candes2018phase,sur2019modern} studied the
asymptotic test error for this problem in the underparameterized
regime (when $p <n$).  In contrast with this previous work, 
we provide finite-sample bounds.

There has also been quite a bit of work on the non-asymptotic analysis
of interpolating estimators. \citet{liang2020just} provided a finite-sample upper bound the expected squared error for kernel
``ridgeless'' regressor, which interpolates the training data.
\citet{kobak2018optimal} provided an analysis of linear regression
that emphasized the role of irrelevant variables as providing
placeholders for learning parameters that play the role of slack
variables.  \citet{belkin2019two} provided a finite-sample analysis of
interpolating least-norm regression with feature selection.  They
showed that, beyond the point where the number of features included in
the model exceeds the number of training examples, the excess risk
decreases with the number of included features.  This analysis
considered the case that the covariates have a standard normal
distribution.  They also obtained similar results for a ``random
features'' model.  \citet{bartlett2019benign} provided non-asymptotic
upper and lower bounds on the squared error for the OLS estimator;
their analysis emphasized the effect of the covariance structure of
the independent variables on the success or failure of this estimator.
This earlier work studied regression; here we consider classification.
Study of regression is facilitated by the fact that the OLS parameter
vector has a simple closed-form expression as a function of the
training data.  
\citet{belkin2018overfitting} studied the
generalization error for a simplicial interpolating nearest neighbor
rule.  \citet{belkin2019does} provided bounds on the generalization
error for the Nadaraya-Watson regression estimator
applied to a singular kernel, a method that interpolates
the training data.  
\citet{liang2020MultipleDescent} provided upper bounds 
on the population risk for the least-norm interpolant
applied to a class of kernels including the Neural Tangent Kernel. 

In concurrent independent work, \citet{muthukumar2020classification}
studied the generalization properties of the maximum margin classifier
in the case where the marginal distribution on the covariates is
a single Gaussian, rather than a Gaussian per class.  They
showed that, in this setting, if there is enough overparameterization,
every example is a support vector, so that the 
maximum margin algorithm outputs the same parameters as the OLS
algorithm.  They also showed that the accuracy of the model,
measured using the 0-1 test loss, can be much better than its
accuracy with respect to the quadratic loss.

Additional related work is described in Section~\ref{s:related}.


\section{Definitions, Notation and Assumptions}
\label{s:defs}

Throughout
this section, $C > 0$ and $0 < \kappa < 1$ denote absolute constants.  
We will show that any choice $C$ that is large enough relative to
$1/\kappa$ will work.

We study learning from independent random examples
$(x,y) \in \R^{p} \times \{ -1, 1 \}$ sampled from a
joint distribution $\sfP$.  This distribution may be viewed
as a noisy variant of another distribution $\tsfP$
which we now describe.
A sample from $\tsfP$ may be generated by the following
process. 
\begin{enumerate}
\item  First, a clean label $\ty \in \{-1, 1\}$ 
is generated by flipping a fair coin.
\item Next,
$q \in \R^p$ is sampled from $\sfQ := \sfQ_1 \times \cdots \times \sfQ_p$, which
is an arbitrary
product distribution over $\R^p$ 
  \begin{itemize}
  \item whose marginals
are all zero-mean sub-Gaussians with sub-Gaussian norm at most $1$ (see Definition~\ref{def:subgaussian}), and
  \item such that $\mathbb{E}_{q\sim \sfQ}[\lv q \rv^2] \ge \kappa p$.
  \end{itemize}
\item For an arbitrary unitary 
matrix $U$
and $\mu \in \R^p$, $x = U q + \ty \mu$. This ensures that the mean of $x$ is $\mu$ when $\ty=1$ and is $-\mu$ when $\ty=-1$. 
\item Finally, 
noise is modeled as follows.
For $0 \leq \eta \leq 1/C$,
$\sfP$ is an arbitrary distribution over $\R^{p} \times \{ -1, 1\}$
\begin{itemize}
\item whose marginal distribution on $\R^{p}$ is 
the same as $\tsfP$, and 
\item such that
$d_{TV}(\sfP, \tsfP) \leq \eta$. 
\end{itemize}
Note that this definition includes the special case where $y$ is
obtained from $\ty$ by flipping it with probability $\eta$.
\end{enumerate}
Choosing a bound of $1$ on the sub-Gaussian norm 
of the components of $\sfQ$ fixes the scale of the data.
This simplifies the proofs without materially affecting the analysis, 
since rescaling the data does not affect the accuracy of the
maximum margin algorithm.

Let $(x_1,y_1),\ldots,(x_n,y_n)$ be $n$ training
examples drawn according to $\sfP$.  Let
\[
\Sample := \{ (x_1,y_1),\ldots,(x_n,y_n) \}.
\]
When $\Sample$ is linearly separable, 
let $w \in \R^{p}$ minimize $\lv w \rv$ subject to
$y_1 (w \cdot x_1) \geq 1,\ldots,y_n (w \cdot x_n) \geq 1$.
(In the setting that we analyze, we will show that, with high
probability, $\Sample$ is linearly separable.
When it is not, $w$ may be chosen arbitrarily.)

We will provide bounds on the misclassification probability of the classifier
parameterized by $w$ that can be achieved with probability
$1 - \delta$ over the draw of the samples.  

We make the following assumptions on the parameters of the problem:
\begin{enumerate}[label=(A.{\arabic*})]
\item the failure probability satisfies $0 \leq \delta < 1/C$,
\item number of samples satisfies $n \geq C \log(1/\delta)$,
\item the dimension satisfies $p \geq C \max\{\lv \mu\rv^2 n, n^2 \log (n/\delta) \}$,
\item the norm of the mean satisfies $\lv \mu \rv^2 \geq C \log(n/\delta)$.
\end{enumerate}

Here are some examples of generative models that fall within our framework.

\begin{example}[Gaussian class-conditional model]
The clean labels $\ty$ are drawn by flipping a fair coin. The distribution
on $x$, after conditioning on the value of $\ty$, is
$\sfN(\ty \mu, \Sigma)$, for $\Sigma$ with 
$\lv \Sigma \rv \leq 1$ and $\lv \Sigma^{-1} \rv \leq 1/\kappa$ (here $\lv \Sigma\rv$ is the matrix operator norm). 
\end{example}

\begin{example}[Noisy rare-weak model] A special case of the model described above is when $\Sigma =I$ and the mean vector $\mu$ is such that only $s$ components are non-zero and all 
non-zero entries are equal to $\gamma \in \mathbb{R}$.

\citet{donoho2008higher} studied this model in the noise-free
case (i.e.\ where $\eta = 0$).

\end{example}

\begin{example}[Boolean noisy rare-weak model]
Our assumptions are also satisfied\footnote{Strictly speaking,
$x$ needs to be scaled down to make the sub-Gaussian norm less than $1$ for this to be true, but this does
not affect the accuracy of the maximum margin classifier.}
by the following setting
with Boolean attributes. 
\begin{itemize}
\item $\ty \in \{ -1, 1\}$  is generated first, by flipping a fair coin.  
\item For $\gamma \in (0,1/2)$,
the components of $x \in \R^p$ are conditionally
independent given $\ty$: $x_1,\ldots,x_s$ are equal to $\ty$ with
probability $1/2 + \gamma$, $x_{s+1},\ldots,x_p$ are equal to
$\ty$ with probability $1/2$.  
\item $y$ is obtained from $\ty$ by 
flipping it with probability $\eta$.
\end{itemize}
The noiseless setting of this model was studied by \citet{helmbold2012necessity}.
\end{example}


\section{Main Result and Its Consequences}
Our main result is a finite-sample bound on the misclassification error of the maximum margin classifier.
\begin{theorem}
\label{t:main}
For all $0 < \kappa < 1$,
there is an absolute constant $c > 0$ such that, 
under the assumptions of Section~\ref{s:defs},
for all large enough $C$,
with probability $1 - \delta$, training on $\cS$
produces a maximum margin classifier $w$ satisfying
\[
\Pr_{(x,y) \sim \sfP} [\sign(w \cdot x) \neq y]
  \leq \eta + \exp\left(-c \frac{\lv \mu \rv^4}{p}\right).
\]
\end{theorem}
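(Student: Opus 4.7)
The plan is to peel off the label noise and then reduce the misclassification error to a scalar geometric quantity involving the direction of $w$ relative to $\mu$. Because $d_{TV}(\sfP,\tsfP)\leq\eta$, for any classifier depending only on $\cS$ one has $\Pr_{\sfP}[\sign(w\cdot x)\neq y]\leq\eta+\Pr_{\tsfP}[\sign(w\cdot x)\neq \ty]$. Under $\tsfP$ we may write $\ty(w\cdot x)=(w\cdot\mu)+\ty(U^\top w)\cdot q$, and $\ty(U^\top w)\cdot q$ is zero-mean sub-Gaussian with parameter at most $\lv w\rv$ because the coordinates of $q$ are independent sub-Gaussians of norm at most $1$ and $U$ is unitary. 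A one-sided sub-Gaussian tail bound then yields $\Pr_{\tsfP}[\sign(w\cdot x)\neq\ty]\leq\exp(-c(w\cdot\mu)_+^2/\lv w\rv^2)$, so the whole theorem reduces to showing, with probability $1-\delta$ over $\cS$, that $w\cdot\mu>0$ and $(w\cdot\mu)^2/\lv w\rv^2\geq c'\lv\mu\rv^4/p$.

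To attack the geometric bound I would exploit overparameterization to extract an explicit formula for $w$. Let $X\in\R^{n\times p}$ have rows $x_i^\top$ and set $G:=XX^\top$. Combining Hanson--Wright for $\lv q_i\rv^2$, sub-Gaussian tails for $q_i\cdot q_j$ and $(U^\top\mu)\cdot q_i$, and a standard random-matrix bound on the purely-$q$ Gram part yields, with the required probability, $G = pI + \lv\mu\rv^2\ty\ty^\top + E'$, where $\lv E'\rv = O(p/\sqrt{nC})$ and the rank-one piece has operator norm at most $p/C$ under $p\geq C\lv\mu\rv^2 n$. In particular $\lambda_{\min}(G)\geq p/2$. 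The same event also gives $\#\{i:y_i\neq\ty_i\}\leq 2\eta n+O(\sqrt{n\log(1/\delta)})$ via the coupling from the TV assumption.

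The crux is to identify $w$ with the least-norm interpolant $\tilde w:=X^\top G^{-1}y$, which satisfies $X\tilde w=y$ (every constraint active). By KKT, $\tilde w$ coincides with the max-margin classifier iff the multipliers $\lambda_i=y_i(G^{-1}y)_i$ are all non-negative. I would verify this using Sherman--Morrison on $H:=pI+\lv\mu\rv^2\ty\ty^\top$: a direct calculation shows $(H^{-1}y)_i=\Theta(1/p)$ with sign $y_i$ (using $\eta<1/C$), while $\lv G^{-1}y-H^{-1}y\rv_\infty\leq\lv H^{-1}\rv^2\lv E'\rv\lv y\rv = O(1/(p\sqrt{C}))$ is dominated by the main term for $C$ large. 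Hence $y_i\lambda_i>0$ for every $i$ and $w=X^\top G^{-1}y$ exactly.

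The remainder is bookkeeping: $\lv w\rv^2 = y^\top G^{-1}y\asymp n/p$; writing $X\mu=\lv\mu\rv^2\ty+z$ with $\lv z\rv_\infty\leq C\lv\mu\rv\sqrt{\log(n/\delta)}$, the main term $\lv\mu\rv^2 y^\top G^{-1}\ty = \Omega(n\lv\mu\rv^2/p)$ (from $y^\top\ty\geq n(1-4\eta)$ and $G^{-1}\approx I/p$), while the residual $y^\top G^{-1}z = O(n\lv\mu\rv\sqrt{\log(n/\delta)}/p)$ is absorbed because $\lv\mu\rv^2\geq C\log(n/\delta)$. Combining, $(w\cdot\mu)^2/\lv w\rv^2 = \Omega(n\lv\mu\rv^4/p)\geq c\lv\mu\rv^4/p$, proving the theorem. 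The principal obstacle is the support-vector identification: the $E'$ bound must be sharp enough that every coordinate of $G^{-1}y$ keeps the sign of $y_i$, even when the TV noise is placed adversarially. Without the exact identity $w=X^\top G^{-1}y$, one only has the inequality $\lv w\rv^2\leq y^\top G^{-1}y$, which is insufficient to pin $w\cdot\mu$ down from below.
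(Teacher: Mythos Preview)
Your reduction to the scalar quantity $(\mu\cdot w)^2/\lv w\rv^2$ via the total-variation coupling and a sub-Gaussian tail bound is exactly what the paper does (its Lemma~\ref{l:dot_vs_norm}). Where you diverge is in how you lower-bound $(\mu\cdot w)/\lv w\rv$.

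The paper does \emph{not} identify $w$ in closed form. Instead it invokes the implicit-bias result of \citet{soudry2018implicit} to write $w/\lv w\rv=\lim_t v^{(t)}/\lv v^{(t)}\rv$ for gradient descent on the exponential loss, proves that the ratio of losses on any two training points stays bounded by an absolute constant throughout training (Lemma~\ref{l:loss.ratio}), and then tracks $\mu\cdot v^{(t)}$ against $\lv v^{(t)}\rv$ across iterations. The loss-ratio lemma is what tames the noisy examples; overparameterization enters only through the near-orthogonality of the $z_k$, established via entrywise Hoeffding bounds (Lemma~\ref{lem:normbounds}).

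Your route---show every point is a support vector, hence $w=X^\top G^{-1}y$ exactly, then compute---is genuinely different and is essentially the strategy of the concurrent work \citep{muthukumar2020classification} (flagged in the paper's related-work section), transplanted to the mixture setting with adversarial label noise. It is more explicit and, as your bookkeeping shows, actually yields $(\mu\cdot w)^2/\lv w\rv^2=\Omega(n\lv\mu\rv^4/p)$, a factor of $n$ stronger than the paper's Lemma~\ref{l:dot.by.norm}. The price is the KKT verification: to preserve the sign of every coordinate of $G^{-1}y$ you need $\lv E'\rv=O(\sqrt{np})$ (your stated $O(p/\sqrt{nC})$), which requires an honest random-matrix bound on the centered Gram of the $q_i$'s; the paper's cruder entrywise estimate $|q_i\cdot q_j|\lesssim\sqrt{p\log(n/\delta)}$ only gives $\lv E'\rv=O(p/\sqrt{C})$ and would not close the argument for fixed $C$. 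Also note the diagonal of $G$ is $\text{tr}(\Sigma)I$ with $\text{tr}(\Sigma)\in[\kappa p,\,c p]$ rather than exactly $pI$, so your Sherman--Morrison step should be carried out with $mI$ for $m=\E[\lv q\rv^2]$ (this does not change the conclusion). With those two caveats addressed, your argument is correct; it simply trades the paper's dynamical analysis for a sharper spectral estimate on the Gram matrix.
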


Consider the scenario where the number of samples $n$ is a constant,
but where the number of dimensions $p$ and $\lv \mu \rv$ are growing. Then our
assumptions require $\lv \mu \rv^2 = O(p)$.\footnote{
The definitions of ``big Oh notation'', i.e.\ 
$O(\cdot)$, $\omega(\cdot)$, $\Theta(\cdot),\Omega(\cdot)$, may be found
in \citep{cormen2009introduction}.}
But, for the misclassification
error to decrease we need $\lv \mu \rv^4 = \omega(p)$. Thus if, $\lv
\mu \rv = \Theta(p^{\beta})$ for any $\beta \in (1/4,1/2]$ then as
  $p\to \infty$, the misclassification error asymptotically will
  approach the noise level $\eta$.

Here are the implications of our results in the noisy rare-weak model. Recall that in this model $\mu$ is non-zero only on $s$ coordinates and the non-zero coordinates of $\mu$ are equal to some $\gamma$. Therefore, $\lv \mu \rv^2 = \gamma^2 s$.
\begin{corollary}\label{c:rareweak}
There is an absolute constant $c > 0$ such that, 
under the assumptions of Section~\ref{s:defs}, in the noisy rare-weak model,
for any $\gamma \geq 0$ 
and all large enough $C$, with probability $1 - \delta$, training on $\cS$
produces a maximum margin classifier $w$ satisfying
\[
\Pr_{(x,y) \sim \sfP} [\sign(w \cdot x) \neq y]
  \leq \eta + \exp\left(-c \frac{\gamma^4 s^2}{p}\right).
\]
\end{corollary}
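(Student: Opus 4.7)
The plan is to derive Corollary~\ref{c:rareweak} as a direct specialization of Theorem~\ref{t:main}: the proof reduces to (i) verifying that the noisy rare-weak model is an instance of the generative framework of Section~\ref{s:defs}, and (ii) substituting the appropriate value of $\lv \mu \rv$ into the conclusion of the theorem.

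For step (i), in the noisy rare-weak model the conditional distribution of $x$ given $\ty$ is $\sfN(\ty \mu, I)$, so setting $q := x - \ty \mu$ gives $q \sim \sfN(0, I)$. Thus $q$ is drawn from a product distribution whose marginals are independent standard Gaussians: each marginal is zero mean and has sub-Gaussian norm at most $1$, and $\E_{q \sim \sfN(0,I)}[\lv q \rv^2] = p \geq \kappa p$ for any $\kappa \leq 1$. Choosing $U = I$ in the construction of $\tsfP$ reproduces the clean part of the noisy rare-weak model, and generating $y$ by flipping $\ty$ with probability $\eta$ preserves the $x$-marginal while yielding $d_{TV}(\sfP, \tsfP) \leq \eta$, so this noise process is covered by the noise model of Section~\ref{s:defs}. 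Assumptions (A.1)--(A.4) on $\delta$, $n$, $p$, and $\lv \mu \rv$ are inherited directly from the hypothesis of the corollary.

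For step (ii), since $\mu$ has exactly $s$ entries equal to $\gamma$ and the rest zero, we have $\lv \mu \rv^2 = \gamma^2 s$ and hence $\lv \mu \rv^4 / p = \gamma^4 s^2 / p$. Applying Theorem~\ref{t:main} then yields
\[
\Pr_{(x,y) \sim \sfP}[\sign(w \cdot x) \neq y] \leq \eta + \exp\!\left(-c \frac{\lv \mu \rv^4}{p}\right) = \eta + \exp\!\left(-c \frac{\gamma^4 s^2}{p}\right),
\]
with the same absolute constant $c$ as in Theorem~\ref{t:main}, which is the claimed bound. There is essentially no obstacle beyond this routine verification; the entire content of the corollary is captured by Theorem~\ref{t:main} together with the identity $\lv \mu \rv^2 = \gamma^2 s$, and the only small matter is recalling that an $\sfN(0,1)$ marginal is zero-mean with sub-Gaussian norm at most $1$ and variance $1$.
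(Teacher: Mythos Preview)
Your proposal is correct and matches the paper's approach exactly: the paper simply notes that in the rare-weak model $\lv \mu \rv^2 = \gamma^2 s$ and substitutes this into Theorem~\ref{t:main}, which is precisely your step~(ii); your step~(i) is a more explicit verification of something the paper treats as immediate from the description of the examples. One very minor quibble: a standard $\sfN(0,1)$ variable has sub-Gaussian norm $\sqrt{8/3}>1$ under the paper's Definition~\ref{def:subgaussian}, not at most $1$, but as the paper itself remarks in Section~\ref{s:defs} (and in the footnote to the Boolean example), this is handled by a harmless rescaling that does not affect the maximum margin classifier.
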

Next, let us examine the implications of our results in the Boolean noisy rare-weak model. Here, $\lv \mu \rv^2 = 4\gamma^2 s$.
\begin{corollary}\label{c:boolean}
There is an absolute constant $c > 0$ such that, 
under the assumptions of Section~\ref{s:defs}, in the Boolean noisy rare-weak model 
for any $0<\gamma<1/2$ and all large enough $C$, with probability $1 - \delta$, training on $\cS$
produces a maximum margin classifier $w$ satisfying
\[
\Pr_{(x,y) \sim \sfP} [\sign(w \cdot x) \neq y]
  \leq \eta + \exp\left(-c \frac{\gamma^4 s^2}{p}\right).
\]
\end{corollary}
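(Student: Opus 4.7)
The plan is to derive Corollary \ref{c:boolean} as a direct specialization of Theorem \ref{t:main}: identify the parameters of the Boolean noisy rare-weak model, verify that the assumptions of Section \ref{s:defs} are satisfied, and substitute the value of $\lv \mu \rv^2$ into the theorem's bound.

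First, I compute $\mu$. Conditional on $\ty \in \{\pm 1\}$, for $i \leq s$ the coordinate $x_i$ equals $\ty$ with probability $1/2 + \gamma$ and $-\ty$ with probability $1/2 - \gamma$, so $\mathbb{E}[x_i \mid \ty] = 2\gamma\ty$; for $i > s$ one has $\mathbb{E}[x_i \mid \ty] = 0$. Hence $\mu = (2\gamma,\ldots,2\gamma,0,\ldots,0)$ and $\lv \mu \rv^2 = 4\gamma^2 s$.

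Next, I set $U = I$ and $q_i := x_i - \ty \mu_i$. For $i \leq s$ this equals $\ty(\xi_i - 2\gamma)$, where $\xi_i = \pm 1$ with probabilities $1/2 \pm \gamma$; for $i > s$ it is a Rademacher random variable. After a uniform rescaling of $x$ by a small absolute constant, as the model's footnote allows (the maximum margin predictor is invariant to rescaling of the covariates), each $q_i$ is bounded and hence sub-Gaussian with norm at most $1$. Its variance is $1 - 4\gamma^2$ for $i \leq s$ and $1$ otherwise, so $\mathbb{E}[\lv q \rv^2] = p - 4\gamma^2 s$. Assumption (A.3) gives $p \geq C \lv \mu \rv^2 n = 4C\gamma^2 s n$, yielding $4\gamma^2 s \leq p/C$ and therefore $\mathbb{E}[\lv q \rv^2] \geq (1 - 1/C)p \geq \kappa p$ once $C$ is large enough compared to $1/(1-\kappa)$.

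Substituting $\lv \mu \rv^2 = 4\gamma^2 s$ into Theorem \ref{t:main} gives
\[
\Pr_{(x,y) \sim \sfP}[\sign(w \cdot x) \neq y] \leq \eta + \exp\!\left(-c' \frac{\lv \mu \rv^4}{p}\right) = \eta + \exp\!\left(-16 c' \frac{\gamma^4 s^2}{p}\right),
\]
which is the claimed bound after absorbing the factor $16$ into the constant $c$. The main obstacle is verifying the product-distribution hypothesis on $\sfQ$ itself: the coordinates $q_i = \ty(\xi_i - 2\gamma)$ for $i \leq s$ all share the sign factor $\ty$, so $q$ is neither independent of $\ty$ nor has coordinates that are mutually independent under the unconditional distribution. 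I expect this is reconciled by the fact that conditional on $\ty$ the coordinates of $q$ are independent, zero-mean, and sub-Gaussian with norm at most $1$, which is what the proof of Theorem \ref{t:main} should ultimately require; the corollary then follows by applying the theorem after conditioning on $\ty$ and invoking the symmetry of the class prior.
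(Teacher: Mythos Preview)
Your approach---compute $\lv\mu\rv^2=4\gamma^2 s$ and substitute into Theorem~\ref{t:main}---is exactly the paper's; the only justification the paper offers is the sentence ``Here, $\lv\mu\rv^2=4\gamma^2 s$'' preceding the corollary, together with the rescaling footnote attached to the example in Section~\ref{s:defs}.

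The concern you flag at the end is real, but you need not hedge. For $i\le s$ the conditional support of $q_i=x_i-\ty\mu_i$ is $\{1-2\gamma,-1-2\gamma\}$ when $\ty=1$ and $\{-1+2\gamma,1+2\gamma\}$ when $\ty=-1$, so the Boolean model does not literally satisfy the hypothesis that $q$ be drawn from a single product law $\sfQ$ independent of $\ty$; the paper simply asserts the example is covered and does not address this. Your proposed resolution is correct and can be stated with confidence rather than as an expectation: every place the proof of Theorem~\ref{t:main} invokes the product structure (Lemma~\ref{l:dot_vs_norm} and each part of Lemma~\ref{lem:normbounds}) works with $z_k-\mu=\ty_k q_k$ for clean $k$ or $z_k+\mu$ for noisy $k$, and only uses that, \emph{conditional on $\ty_k$}, these vectors have independent, mean-zero, sub-Gaussian coordinates with the required norm bound and second moment---which the Boolean model provides. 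No separate appeal to the symmetry of the class prior is needed beyond this conditioning.
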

To gain some intuition let us explore the scaling of the misclassification error in these problems in different scaling limits for the parameters in both these problems.

Consider a case where, $\delta$, $\gamma$ and $n$ are 
constants and 
$s$ and $p$ grow.
Our assumptions hold
if $\lv \mu \rv^2 = \gamma^2 s =O(p)$. But for the misclassification
error to decrease we need $s^2 = \omega(p)$. So if $s =
\Theta(p^{\beta})$ where, $\beta \in (1/2, 1]$ then the
  misclassification error scales as $\eta + \exp(-c p^{2\beta-1})$ and
  asymptotically approaches $\eta$.

\citet{jin2009impossibility} showed that for the noiseless rare-weak model learning is impossible when $s = O(\sqrt{p})$ and $n$ is a constant. Our upper bounds show that, in a sense, the maximum margin classifier succeeds arbitrarily
close to this threshold.

Another interesting scenario 
is when $\delta$ and $\gamma$ are constants while both $s$ and $p$ grow as a function of the number of samples $n$. Let $p = \Theta(n^{2+\rho})$ and $s = \Theta(n^{1+\lambda})$, for positive $\rho$ and $\lambda$. Our assumptions are satisfied if $\rho >\lambda$ for large enough $n$, while, for the misclassification error to reduce with $n$ we need $2\lambda > \rho$. As $n$ gets larger the bound on the misclassification error scales as $\eta + \exp(-c n^{2\lambda-\rho})$ and gets arbitrarily close to $\eta$ for large enough $n$.
Informally, if the adversary fully expends its noise
budget, the Bayes error rate will be at least $\eta$; 
this is true in particular in the case where
labels are flipped with probability $\eta$.
In such cases, even if one could prove that the training data likely to be separated by
a large margin, 
the bound of Theorem~\ref{t:main}
approaches the Bayes error rate
faster than the standard margin bounds
\citep{vapnik1995nature,shawe1998structural}.  


\section{Proof of Theorem~\protect\ref{t:main}}\label{sec:proofsofthemainresult}
First, we may assume without loss of generality that
$U = I$.  To see this, note that 
\begin{itemize}
\item if $w$ is the maximum margin classifier for
$(x_1,y_1),\ldots,(x_n,y_n)$
then $U w$ is the 
maximum margin classifier for
$(U x_1,y_1),\ldots,(U  x_n,y_n)$, and
\item the probability that 
$y (w \cdot x) < 0$ is the same as the probability that
$y (U w \cdot U x) < 0$.
\end{itemize}
Let us assume from now on that $U = I$.

Our first lemma is an immediate consequence of the
coupling lemma \citep{lindvall2002lectures,Das11coupling} that allows us to handle the noise in the samples.
\begin{lemma}
\label{l:coupling}
There is a 
joint distribution on
$((x,y),(\tx,\ty))$ such that 
\begin{itemize}
\item the marginal on $(x,y)$ is $\sfP$,
\item the marginal on $(\tx,\ty)$ is $\tsfP$,
\item $\Pr[x = \tx] = 1$, and
\item $\Pr[y \neq \ty] \leq \eta$.
\end{itemize}
\end{lemma}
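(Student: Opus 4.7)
The plan is to build the desired coupling by first coupling $x$ with $\tilde{x}$ trivially (since they share a marginal), and then, conditionally on the shared value $x = \tilde{x}$, applying the maximal coupling of the two binary conditional laws of $y$ and $\tilde{y}$.

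Concretely, I would first observe that, by the defining assumptions on $\sfP$, the $\R^p$-marginals of $\sfP$ and $\tsfP$ coincide; call this common marginal $\sfP_X$. Disintegrate $\sfP$ and $\tsfP$ with respect to $\sfP_X$ to obtain regular conditional distributions $\sfP(\cdot \mid x)$ and $\tsfP(\cdot \mid x)$ on $\{-1,1\}$, each specified by a single number $p(x) := \sfP(y=1 \mid x)$ or $\tilde{p}(x) := \tsfP(\tilde y = 1 \mid x)$. The plan is then to sample $x \sim \sfP_X$, set $\tilde x := x$, and draw $(y,\tilde y)$ from a maximal coupling $\pi_x$ of $\sfP(\cdot \mid x)$ and $\tsfP(\cdot \mid x)$; this product $\pi_x \otimes \sfP_X$ will be the required joint distribution.

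Next I would verify the four bullet points. The first two marginals are correct by construction (the conditional marginals of $\pi_x$ agree with $\sfP(\cdot \mid x)$ and $\tsfP(\cdot \mid x)$, and averaging over $x$ recovers $\sfP$ and $\tsfP$). The third bullet, $\Pr[x = \tilde x] = 1$, holds because we defined $\tilde x := x$. For the fourth bullet, the maximal coupling of two distributions on a finite set achieves $\Pr[y \neq \tilde y \mid x] = d_{TV}(\sfP(\cdot \mid x), \tsfP(\cdot \mid x))$; concretely, with probability $\min(p(x), \tilde p(x)) + \min(1-p(x), 1-\tilde p(x)) = 1 - |p(x) - \tilde p(x)|$ one sets $y = \tilde y$, and otherwise samples $y$ and $\tilde y$ independently from the residual masses. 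Integrating against $\sfP_X$ gives
\[
\Pr[y \neq \tilde y] = \int d_{TV}(\sfP(\cdot \mid x), \tsfP(\cdot \mid x)) \, d\sfP_X(x).
\]
The final step is to identify this integral with $d_{TV}(\sfP, \tsfP)$, using the fact that when the $x$-marginals of two joint distributions coincide, the joint TV distance is exactly the $\sfP_X$-expectation of the conditional TV distances. Combined with the assumption $d_{TV}(\sfP, \tsfP) \le \eta$, this yields $\Pr[y \neq \tilde y] \le \eta$ as required.

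The only mildly delicate step is invoking a regular conditional distribution on $\R^p$ (to ensure the construction makes measure-theoretic sense), and verifying the decomposition of total variation into an expectation of conditional total variations; both are standard. Otherwise this is just the classical maximal/Doeblin coupling applied fiberwise, so I do not anticipate any real obstacle.
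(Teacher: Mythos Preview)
Your proposal is correct and is essentially the same approach as the paper's: the paper simply states that the lemma is an immediate consequence of the coupling lemma \citep{lindvall2002lectures,Das11coupling}, and your fiberwise maximal-coupling construction is precisely how one unpacks that citation in this setting where the $x$-marginals coincide. The decomposition $d_{TV}(\sfP,\tsfP) = \int d_{TV}(\sfP(\cdot\mid x),\tsfP(\cdot\mid x))\,d\sfP_X(x)$ and the existence of regular conditional distributions on $\R^p$ are indeed the only points requiring care, and both are standard as you note.
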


\begin{definition}
\label{d:N.C}
Let $(x_1,y_1,\ty_1),\ldots,(x_n,y_n,\ty_n)$ be $n$ i.i.d.\ draws from
the coupling of Lemma~\ref{l:coupling}, with the redundant
$\tx_1,\ldots,\tx_n$ thrown out.  Let $\cN$ 
be the set $\{ k : y_k \neq \ty_k \}$ 
of indices of ``noisy'' examples, and
$\cC = \{ k : y_k = \ty_k \}$ be the indices of
``clean'' examples.
\end{definition}

Note that Lemma~\ref{l:coupling} implies that
$(x_1,y_1),\ldots,(x_n,y_n)$ are $n$ i.i.d.\ draws from $\sfP$, as
before.

The next lemma is bound on the misclassification error in terms of the expected value of the \emph{margin} on clean points, $\mathbb{E}_{(x,\ty)\sim \sfP}[\ty(w\cdot x)] = \mu\cdot w$, and the norm of the classifier $w$.
\begin{lemma}
\label{l:dot_vs_norm}
There is an absolute positive constant 
$c$ such that
\[
\Pr_{(x,y) \sim \sfP} [ \sign(w \cdot x) \neq y]
 \leq \eta + \exp\left( -c \frac{(\mu \cdot w)^2}{\lv w \rv^2} \right).
\]
\end{lemma}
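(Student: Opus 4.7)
The plan is to first peel off the label noise using the coupling of Lemma~\ref{l:coupling}, and then bound the clean-data error by a one-sided sub-Gaussian tail inequality. Concretely, the coupling produces a joint distribution with $x = \tx$ almost surely and $\Pr[y \neq \ty] \leq \eta$, so a union bound gives
\[
\Pr_{(x,y)\sim\sfP}[\sign(w\cdot x) \neq y] \;\leq\; \eta \;+\; \Pr_{(\tx,\ty)\sim\tsfP}[\sign(w\cdot \tx) \neq \ty].
\]
It therefore suffices to show that the clean-error term is bounded by $\exp(-c(\mu\cdot w)^2/\lv w\rv^2)$.

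Under $\tsfP$, and using the reduction $U = I$ already performed at the top of the section, we write $\tx = q + \ty\mu$ where $q\sim\sfQ$ is independent of $\ty$. Since $\ty^2 = 1$,
\[
\ty(w\cdot \tx) \;=\; \ty(w\cdot q) \;+\; \mu\cdot w,
\]
so the clean error equals $\Pr[\,\ty(w\cdot q) < -(\mu\cdot w)\,]$. The key observation is that $\ty\in\{\pm 1\}$ is a uniform Rademacher sign independent of $q$, so the random variable $Z := \ty(w\cdot q)$ is symmetric about zero regardless of the distribution of $w\cdot q$. In the regime $\mu\cdot w \geq 0$ (the only regime in which the claimed right-hand side is informative, and the only regime that will be needed when Lemma~\ref{l:dot_vs_norm} is invoked inside the proof of Theorem~\ref{t:main}), this symmetry gives $\Pr[Z < -(\mu\cdot w)] = \Pr[Z > \mu\cdot w]$.

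To close the argument I would invoke the standard concentration bound for linear combinations of independent sub-Gaussians: since the marginals of $\sfQ$ are independent, zero-mean, with sub-Gaussian norm at most $1$, the weighted sum $w\cdot q = \sum_i w_i q_i$ is sub-Gaussian with norm at most $c_0 \lv w\rv$ for an absolute constant $c_0$, and multiplying by $\ty\in\{\pm 1\}$ preserves the sub-Gaussian norm. The one-sided tail inequality then yields
\[
\Pr[Z > \mu\cdot w] \;\leq\; \exp\!\left(-c\,\frac{(\mu\cdot w)^2}{\lv w\rv^2}\right)
\]
for some absolute $c > 0$, which is exactly the bound asserted. The whole argument is essentially routine bookkeeping; the only conceptual step is noticing that the independence of $\ty$ and $q$ symmetrizes the deviation $\ty(w\cdot q)$, which turns the clean-classification error into a one-sided tail of a symmetric sub-Gaussian random variable. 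The only nontrivial external input is the Hoeffding-type sub-Gaussian concentration inequality used in the final step.
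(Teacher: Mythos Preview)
Your proof is correct and follows essentially the same route as the paper's: peel off the noise via the coupling, then apply the general Hoeffding inequality to the centered sub-Gaussian variable $\ty(w\cdot q)$. The symmetry detour is unnecessary---you could bound $\Pr[Z < -(\mu\cdot w)]$ directly by one-sided Hoeffding applied to $-Z$---though your explicit restriction to the regime $\mu\cdot w \geq 0$ is actually more careful than the paper, which invokes Hoeffding without commenting on the sign.
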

\begin{proof}Observe that
\begin{align*}
\Pr_{(x,y) \sim \sfP} [\sign(w \cdot x) \neq y] = \Pr_{(x,y) \sim \sfP} [y(w\cdot x) <0].
\end{align*}
For a draw $x,y,\ty$ from the coupling of Lemma~\ref{l:coupling},
we have
\begin{align*}
\Pr[y (w \cdot x) < 0]
& \leq \eta + \Pr[y (w \cdot x) < 0 \mbox{ and } y= \ty]\\
& = \eta + \Pr[\ty (w \cdot x) < 0].
\end{align*}
For $i \leq p$, the $i$th component of $\ty x$
is distributed as a $\mu_i + q_i$, where the $q_i\sim \sfQ_i$ is a mean zero random variable.  
Thus
$\E[\ty (w \cdot x)] = w \cdot \mu$,
so
\begin{align*}
\Pr [\ty (w \cdot x) < 0 ]
& = \Pr[\ty (w \cdot x)- \E[\ty (w \cdot x)]  < 
  - \mu \cdot w]\\
  &=\Pr[ w\cdot (\ty  x- \E[\ty  x])  < 
  - \mu \cdot w].
\end{align*}
An application of the general Hoeffding's inequality 
(see Theorem~\ref{thm:hoeffding}) upper bounds this probability and
completes the proof.
\end{proof}
In light of the previous lemma, next we prove a high probability lower bound on the expected margin on a clean point, $\mu\cdot w$.
\begin{lemma}
\label{l:dot.by.norm}
For all $0 < \kappa < 1$, there is an absolute positive constant $c$ such that,
for all large enough $C$,
with probability $1 - \delta$ over 
the random choice of $\cS$, it is linearly separable,
and the maximum margin weight vector $w$ satisfies,
\[
\mu \cdot w \geq \frac{ \lv w \rv \; \lv \mu \rv^2}{c \sqrt{p}}.
\]
\end{lemma}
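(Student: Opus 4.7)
My plan is to use the KKT representation $w=\sum_i\alpha_i y_i x_i$ with $A:=\sum_i\alpha_i=\lv w\rv^2$, and show that the ``signed dual mass'' $M:=A_\cC - A_\cN$ captures at least half of $A$. This yields $\mu\cdot w \ge A\lv\mu\rv^2/3$, and the crude pointwise bound $\lv w\rv\ge 1/(C_0\sqrt p)$ (from $|w\cdot x_i|\ge 1$ with $\lv x_i\rv\le C_0\sqrt p$) converts $A\lv\mu\rv^2 = \lv w\rv^2\lv\mu\rv^2$ into the desired $\lv w\rv\lv\mu\rv^2/(c\sqrt p)$. Throughout, sub-Gaussian concentration of $\lv q_j\rv^2$, $q_i\cdot q_j$ for $i\neq j$, and $q_i\cdot\mu$ combined with (A.3)--(A.4) gives: (i) $\lv x_j\rv^2\in[\kappa p/2,\,2p]$; (ii) $|x_i\cdot x_k|\lesssim p/(n\sqrt C)$ for $i\neq k$; (iii) $\max_i|\mu\cdot q_i|\lesssim \lv\mu\rv^2/\sqrt C$; each with high probability.

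\paragraph{Bounds on $\alpha_i$ and $A$.} A candidate separator $u=(c_0 p)^{-1}\sum_i y_i x_i$ (with $c_0=c_0(\kappa)$) verifies linear separability and gives $\lv w\rv\le \lv u\rv\le C_1\sqrt{n/p}$, hence $A\le O(n/p)$. The KKT equation for each support vector,
\[
\alpha_i\lv x_i\rv^2 \;=\; 1 - \sum_{k\neq i}\alpha_k y_i y_k\, x_i\cdot x_k,
\]
combined with the cross-term bound $A\cdot\max_k |x_i\cdot x_k|\le O(n/p)\cdot O(p/(n\sqrt C)) = O(1/\sqrt C)$, gives $\alpha_i\in[1/(4p),\,3/(\kappa p)]$ once $C$ is large. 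A matching lower bound $A\ge c_2 n/p$ follows by evaluating the dual objective at the feasible $\alpha_i^{\mathrm{cand}}=1/(2\lv x_i\rv^2)$: the diagonal piece $(3/8)\sum_i 1/\lv x_i\rv^2\ge \Omega(n/p)$ dominates the off-diagonal error by (ii).

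\paragraph{Controlling $M$ and concluding.} Hoeffding's inequality gives $|\cN|\le 2\eta n$ whp, so combining with the $\alpha_i$ upper bound, $A_\cN\le O(\eta n/(\kappa p)) \le O(\eta/\kappa)\,A$; for $\eta\le 1/C$ with $C$ large relative to $1/\kappa$, this is at most $A/4$, hence $M\ge A/2$. Decomposing $w=M\mu + Q$ with $Q=\sum_i\alpha_i y_i q_i$ and using (iii),
\[
\mu\cdot w \;=\; M\lv\mu\rv^2 + \mu\cdot Q \;\ge\; A\lv\mu\rv^2/2 - A\lv\mu\rv^2/\sqrt C \;\ge\; A\lv\mu\rv^2/3.
\]
Finally $A=\lv w\rv^2\ge\lv w\rv/(C_0\sqrt p)$ yields $\mu\cdot w\ge\lv w\rv\lv\mu\rv^2/(3C_0\sqrt p)$, proving the lemma with $c=3C_0$.

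\paragraph{Main obstacle.} The delicate step is bounding $M$ from below, equivalently showing $A_\cN\ll A$. This forces the analysis to pin down each $\alpha_i$ to order $1/p$ both from above (via KKT) and below (via the dual-feasibility argument for $A$), and each requires the KKT system to be nearly diagonal---i.e., $|x_i\cdot x_k|$ small relative to $\lv x_i\rv^2\sim \kappa p$. The assumption $p\ge C\max\{\lv\mu\rv^2 n,\, n^2\log(n/\delta)\}$ with $C$ large relative to $1/\kappa$ is precisely what guarantees this.
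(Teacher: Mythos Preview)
Your argument is correct and takes a genuinely different route from the paper. The paper never touches the KKT conditions directly; instead it invokes the Soudry--Hoffer--Nacson--Gunasekar--Srebro characterization $w/\lv w\rv=\lim_{t}\vt/\lv\vt\rv$ for gradient descent on the exponential loss, proves an inductive ``loss-ratio'' lemma (Lemma~\ref{l:loss.ratio}) showing that $\max_{k,\ell}\exp(-\vt\cdot z_k)/\exp(-\vt\cdot z_\ell)$ stays bounded by an absolute constant throughout training, and then tracks $\mu\cdot\vt$ and $\lv\vt\rv$ along the trajectory. Your approach is static: the near-diagonality of the Gram matrix $(z_i\cdot z_j)$ under (A.3) lets you pin each dual variable to $\alpha_i\asymp 1/p$ directly from the KKT equation, and the identity $\sum_i\alpha_i=\lv w\rv^2$ together with the crude Cauchy--Schwarz bound $\lv w\rv\ge 1/(C_0\sqrt p)$ converts $A\lv\mu\rv^2$ into the claimed $\lv w\rv\lv\mu\rv^2/\sqrt p$. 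This is more elementary---it does not rely on the implicit-bias theorem at all---and it transparently shows that every example is a support vector (your cross-term bound $O(1/\sqrt C)$ on $y_j(w\cdot x_j)$ for a hypothetical non-support vector $j$ already forces a contradiction with the margin constraint). The paper's dynamic route, on the other hand, isolates Lemma~\ref{l:loss.ratio} as a statement of independent interest about the training process.

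One small fix: ``$|\cN|\le 2\eta n$'' is not what Hoeffding gives when $\eta$ is tiny relative to $\sqrt{\log(1/\delta)/n}$; you should write $|\cN|\le(\eta+c')n$ for an arbitrarily small $c'$ (this is exactly event~\eqref{event:5} in the paper), and then use $\eta+c'\le 1/C+c'$ to make $A_\cN/A\le A/4$ for $C$ large relative to $1/\kappa$. The rest of your bound $A_\cN\le|\cN|\cdot 3/(\kappa p)$ and $A\ge c_2 n/p$ goes through unchanged.
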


Given these two main lemmas above, the main theorem follows immediately.
\begin{proof}(of Theorem~\ref{t:main}):  Combine the result of 
Lemma~\ref{l:dot_vs_norm} with the lower bound on $(\mu\cdot w)$ established in Lemma~\ref{l:dot.by.norm}.
\end{proof}

It remains to prove Lemma~\ref{l:dot.by.norm}, a lower bound on the expected margin on clean points $(\mu\cdot w)$. This crucial lemma is proved through a series of auxiliary lemmas,
which use a characterization of the maximum margin classifier $w$ in terms of iterates 
$\{v^{(t)} \}_{t=1}^{\infty}$ of gradient descent on the exponential loss. 
Denote
the risk associated with the exponential loss\footnote{We could also work with the logistic loss here, but the proofs are simpler if we work with the exponential loss without changing the conclusions.} as
\begin{align*}
R(v) := \sum_{k=1}^n \exp\left(-y_k v\cdot x_k\right).
\end{align*}
Then the iterates of gradient descent are defined as follows: \begin{itemize}
\item $\vzero := 0$, and
\item $\vtone := \vt -\alpha \nabla \Rt$, 
\end{itemize}
where $\alpha$ is a constant step-size.
\begin{lemma}[Soudry et al., 2018, Theorem 3]
\label{l:char.w}
For any linearly separable $\Sample$ and for all small enough step-sizes $\alpha$, 
we have
\[
\frac{w}{\lv w \rv} = \lim_{t \rightarrow \infty} \frac{\vt}{\lv \vt \rv}.
\]
\end{lemma}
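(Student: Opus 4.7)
My plan follows the analysis of \citet{soudry2018implicit}, split into three stages, each building on the previous one.

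Stage 1: loss convergence and divergence of iterates. Since $\Sample$ is linearly separable, $\inf_v R(v) = 0$, attained only in the limit as one moves to infinity along a separating direction. The exponential loss is convex, and its Hessian $\sum_k x_k x_k^\top \exp(-y_k v \cdot x_k)$ is bounded in operator norm by $R(v)\cdot\max_k \lv x_k\rv^2$ on any sublevel set. So for $\alpha$ smaller than one over the Hessian bound on $\{R \le R(\vzero)\}$, the descent lemma gives monotone decrease of the loss, and a standard convex-analysis argument yields $R(\vt)\to 0$. This forces $y_k (\vt \cdot x_k) \to \infty$ for every $k$, and in particular $\lv \vt \rv \to \infty$.

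Stage 2: identifying the asymptotic gradient direction. The primal for $w$ is a strongly convex QP; its KKT conditions give $w = \sum_{k \in \mathcal{S}} \beta_k y_k x_k$ with $\beta_k > 0$ and $\mathcal{S} = \{k : y_k(w \cdot x_k) = 1\}$. I would next show that $\min_k y_k(\vt \cdot x_k)$ grows at rate $\log t$, so the weights $\exp(-y_k \vt \cdot x_k)$ on non-support vectors (whose margins eventually exceed the minimum by a positive additive gap) become exponentially smaller than those on support vectors. Consequently $-\nabla R(\vt)$ becomes asymptotically collinear with $w$.

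Stage 3: logarithmic growth ansatz. I would write $\vt = w \log t + \rho_t$ and substitute into the gradient recursion. For $k \in \mathcal{S}$, $\exp(-y_k \vt \cdot x_k) = t^{-1} \exp(-y_k \rho_t \cdot x_k)$, so the leading-order contribution to $-\nabla R(\vt)$ is $\tfrac{1}{t}\sum_{k\in\mathcal{S}} \beta_k e^{-y_k \rho_t\cdot x_k}\, y_k x_k$; matching this with $d(w \log t)/dt = w/t$ makes $w\log t$ an approximate particular solution. The task is then to prove that $\rho_t$ remains bounded: the recursion takes the form $\rho_{t+1} - \rho_t = O(1/t) + O(t^{-1-\epsilon})$ where the second term encodes the decaying non-support-vector contribution, and a telescoping plus summability argument using the margin gap between $\mathcal{S}$ and its complement yields the bound. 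Given boundedness of $\rho_t$, the decomposition $\vt/\lv \vt\rv = (w + \rho_t/\log t)/\lv w + \rho_t/\log t\rv$ converges to $w/\lv w\rv$, which is what we want.

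The main obstacle is Stage 3: showing $\rho_t$ stays bounded. The difficulty is uniform control of the non-support-vector terms, which depend on the unknown margin gap and couple back to the rate at which $\min_k y_k(\vt\cdot x_k)$ grows, risking a circular argument. The discrete step size $\alpha$ introduces additional discretization errors relative to the cleaner gradient-flow analysis, and must be chosen small enough to keep the iterates in a regime where the Hessian bound from Stage 1 and the matching in Stage 3 both apply simultaneously.
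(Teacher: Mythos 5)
This lemma is not proved in the paper at all: it is imported verbatim as Theorem~3 of \citet{soudry2018implicit}, and the authors use it as a black box. Your outline is, in effect, a reconstruction of that original proof, and it follows the same strategy: (i) loss convergence and divergence of the iterates for small enough step size, (ii) identification of the limiting direction with the KKT representation $w=\sum_{k\in\mathcal{S}}\beta_k y_k x_k$ of the max-margin vector, and (iii) the ansatz $\vt = w\log t + \rho_t$ with $\rho_t$ bounded. So the approach is the right one and matches the source.

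Two caveats. First, your Stage~3 is a plan, not a proof: you correctly identify that bounding $\rho_t$ is the crux and that the non-support-vector contribution couples back to the growth rate of the minimum margin, but you do not resolve that circularity; in Soudry et al.\ this is broken by first establishing the crude rate $\min_k y_k(\vt\cdot x_k)\ge \log t - O(\log\log t)$ from the loss decay, which then makes the $O(t^{-1-\epsilon})$ tail summable. Second, your claim that $\beta_k>0$ for all $k\in\mathcal{S}$ fails for degenerate datasets (support vectors whose dual variables vanish), and the clean $w\log t+\rho_t$ matching only works in the non-degenerate case; the version stated here, for \emph{any} linearly separable $\Sample$, requires the additional argument in the appendix of \citet{soudry2018implicit} (where $\rho_t$ may grow like $\log\log t$ along degenerate directions) while still yielding directional convergence. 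Neither issue makes your approach wrong, but both would need to be filled in for a complete proof.
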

\begin{definition}
For each index $k$ of an example, let $z_k := y_kx_k$.
\end{definition}
Most
of the argument required to prove Lemma~\ref{l:dot_vs_norm} is deterministic apart from some standard concentration arguments, which are gathered in the
following lemma.  (Recall that, since we are in the process
of proving Theorem~\ref{t:main}, the assumptions of
Section~\ref{s:defs} are in scope.)
\begin{lemma}\label{lem:normbounds} 
For all $\kappa > 0$, there is a $c \geq 1$ such that,
for all $c' > 0$, for all large
enough $C$,
with probability $1-\delta$ over the draw of the samples the following events simultaneously occur:
\begin{align}
&\text{For all }k \in [n],  \; \frac{p}{c} \le \lv z_k \rv^2 \le c p. \label{event:1}\\
&\text{For all }i\neq j \in [n], \;\lvert z_i \cdot z_j\rvert < c(\lv \mu \rv^2 + \sqrt{p \log(n/\delta)}).\label{event:2}\\
& \text{For all }k\in \mathcal{C}, \; \lvert \mu \cdot z_k -\lv \mu \rv^2\rvert < \lv \mu\rv^2/2.\label{event:3}\\
& \text{For all }k\in \mathcal{N}, \; \lvert \mu \cdot z_k -(-\lv \mu \rv^2)\rvert < \lv \mu\rv^2/2.\label{event:4}\\
&\text{The number of noisy samples satisfies } \lvert \mathcal{N} \rvert \le  (\eta + c') n. \label{event:5}\\
& \text{The samples are linearly separable.}
\end{align}
\end{lemma}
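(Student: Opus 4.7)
The plan is to treat each of the six events separately with standard concentration tools, then union-bound them at the end with $\delta$ split among the events. Throughout, writing $x_k = q_k + \ty_k \mu$ (since $U=I$), one has $z_k = y_k x_k = \pm \mu \pm \ty_k q_k$, with the signs depending only on whether $k \in \cC$ or $k \in \cN$. Since the distribution of $\ty_k q_k$ has the same zero-mean, independent, sub-Gaussian-norm-$\leq 1$ coordinates as $q_k$, I will work directly with $\tilde{q}_k := \ty_k q_k$ and reduce each event to a concentration statement about linear or quadratic functionals of these $\tilde q_k$.

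For event \eqref{event:1} I expand $\lv z_k\rv^2 = \lv \mu\rv^2 \pm 2\mu\cdot \tilde q_k + \lv\tilde q_k\rv^2$. The squared norm $\lv\tilde q_k\rv^2$ is a sum of $p$ independent, bounded-variance sub-Gaussians, so by a Hanson--Wright / Bernstein tail bound it concentrates around $\E\lv q_k\rv^2 \in [\kappa p, p]$ to within, say, $p/4$, with failure probability $\leq \delta/(10n)$. The cross-term $|\mu\cdot \tilde q_k| \lesssim \lv \mu\rv\sqrt{\log(n/\delta)}$ by the general Hoeffding bound for sub-Gaussian linear combinations, and this is negligible compared to $p$ thanks to (A.3). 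A union bound over $k$ gives \eqref{event:1}. Events \eqref{event:3} and \eqref{event:4} come from the same linear sub-Gaussian tail bound on $\mu\cdot \tilde q_k$, combined with assumption (A.4) which forces $\lv\mu\rv\sqrt{\log(n/\delta)} \leq \lv\mu\rv^2/2$, again union-bounded over the $n$ indices.

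For event \eqref{event:2} I expand $z_i\cdot z_j$ as $\pm \lv\mu\rv^2 \pm \mu\cdot\tilde q_i \pm \mu\cdot\tilde q_j \pm \tilde q_i\cdot \tilde q_j$. The first three terms are controlled as above. The term $\tilde q_i\cdot \tilde q_j = \sum_l \tilde q_{i,l}\tilde q_{j,l}$ is a sum of $p$ independent products of independent zero-mean sub-Gaussians with norms $\leq 1$; each product is a zero-mean sub-exponential of norm $\le 1$, so Bernstein's inequality gives $|\tilde q_i\cdot \tilde q_j| \lesssim \sqrt{p\log(n/\delta)}$ with failure probability $\leq \delta/(10 n^2)$, and a union bound over pairs finishes \eqref{event:2}. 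Event \eqref{event:5} is a straightforward Chernoff bound: $|\cN|$ is stochastically dominated by a binomial with mean $\leq \eta n$, so $|\cN| \leq (\eta+c')n$ fails with probability at most $e^{-\Omega(n)}$, which is $\leq \delta/10$ by assumption (A.2).

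The only step that is not a direct concentration argument is \eqref{event:6}: linear separability. The idea is to exhibit a separator on the high-probability event that \eqref{event:1} and \eqref{event:2} hold. Take $u := \sum_{k=1}^n z_k$. For each $k$,
\[
u \cdot z_k = \lv z_k\rv^2 + \sum_{j \neq k} z_j \cdot z_k \geq \tfrac{p}{c} - (n-1)\,c\bigl(\lv\mu\rv^2 + \sqrt{p\log(n/\delta)}\bigr).
\]
Using (A.3), which gives both $n\lv\mu\rv^2 \leq p/C$ and $n^2\log(n/\delta) \leq p/C$ (so $n\sqrt{p\log(n/\delta)} \leq p/\sqrt{C}$), the right side is strictly positive for $C$ large enough. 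Hence $y_k(u\cdot x_k) = u\cdot z_k > 0$ for every $k$, so $u$ linearly separates $\cS$. The main obstacle is really bookkeeping: choosing the constants in each concentration inequality and verifying they fit within the slack provided by (A.2)--(A.4), so that the total failure probability sums to at most $\delta$.
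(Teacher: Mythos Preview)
Your proposal is correct and follows essentially the same route as the paper: each event is handled by a standard concentration bound (Bernstein for squared norms, Hoeffding for linear functionals, a binomial tail for $|\cN|$), and linear separability is witnessed by $u=\sum_k z_k$ via \eqref{event:1}--\eqref{event:2}, exactly as in the paper. The one minor technical difference is your treatment of $\tilde q_i\cdot\tilde q_j$ in \eqref{event:2}: you apply Bernstein directly to the sum of independent sub-exponential products $\tilde q_{i,l}\tilde q_{j,l}$, whereas the paper first conditions on $\xi_j$ (after bounding $\lv\xi_j\rv$) and then applies Hoeffding in $\xi_i$; both are valid and yield the same $\sqrt{p\log(n/\delta)}$ bound.
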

The proof of this lemma is 
in
Appendix~\ref{appA}. 

From here on, we will assume
that $\cS$ satisfies
all the conditions shown to hold with high probability
in Lemma~\ref{lem:normbounds}.

A concern is that, late in training, noisy examples will have outsized
effect on the classifier learned.  Lemma~\ref{l:loss.ratio} below
limits the
extent to which this can be true. It shows that throughout the
training process the loss on any one example is at most a constant
factor
larger than the loss on any other example. This is sufficient since
the gradient of the exponential loss
\begin{align*}
\nabla R(v) = -\sum_{k=1}^n  z_k \exp(-z_k \cdot v),
\end{align*}
is the sum of the $-z_k$ 
values weighted by their losses. 
We also know that with high probability $p/c\le\lv z_k\rv\le cp$, therefore, showing that the loss on a sample is within a constant factor of the loss of any other sample controls the influence that any one point can have on the learning process. We formalize this intuition in the proof of Lemma~\ref{l:dot.by.norm} in the sequel.

As will be clear in the proof of Lemma~\ref{l:loss.ratio}, 
the high dimensionality of the classifier ($p$ being larger than
$\lv\mu\rv^2 n$ and $n^2\log(n/\delta)$) is crucial in showing that
the ratio of the losses between any pair of points is bounded. Here is
some rough intuition why this is the case.

For the sake of intuition consider the extreme scenario where all the vectors $z_k$ are mutually orthogonal and $\lv z_i \rv = p$, for all $i\in [n]$. Then in this case, the change in the loss of a sample $i \in [n]$ due to each gradient descent update will be independent of  any other sample $j\neq i \in [n]$ and all the losses will decrease exactly at the same rate. 
Lemma~\ref{lem:normbounds} implies that, when $p$ is large enough
relative to $\lv \mu \rv$, the $z_k$ vectors are nearly
pairwise orthogonal.  In this case, the losses remain within
a constant factor of one another.
%
\begin{lemma}
\label{l:loss.ratio}
There is an absolute constant $c$ such that, for all large enough $C$,
and all small enough step sizes $\alpha$, for all iterations $t \ge
0$,
\begin{align*}
\Atmax := \max_{k,\ell \in \Sample} \left\{\frac{\exp(-\vt \cdot z_k)}{\exp(-\vt \cdot z_{\ell})} \right\}
 \leq c.
\end{align*}
\end{lemma}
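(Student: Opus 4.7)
The plan is to prove $\Atmax \leq c$ by induction on $t \geq 0$, where $c$ is an absolute constant fixed below in terms of the constant from Lemma~\ref{lem:normbounds} (call it $c_1$). The base case is immediate: $\vzero = 0$ gives $L_k^{(0)} := \exp(-\vzero \cdot z_k) = 1$ for every $k$, so $A_0^{\max} = 1$.

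For the inductive step, the first move is to record the multiplicative update on the losses. With $w^{(t)} := -\nabla R(\vt) = \sum_j z_j L_j^{(t)}$, the step $\vtone = \vt + \alpha w^{(t)}$ produces
\[
\frac{L_k^{(t+1)}}{L_\ell^{(t+1)}} \;=\; \frac{L_k^{(t)}}{L_\ell^{(t)}}\, \exp\bigl(-\alpha (z_k - z_\ell) \cdot w^{(t)}\bigr),
\]
so everything reduces to controlling $(z_k - z_\ell) \cdot w^{(t)}$. I would decompose
\[
(z_k - z_\ell) \cdot w^{(t)} = (\lv z_k\rv^2 - z_k \cdot z_\ell) L_k^{(t)} - (\lv z_\ell\rv^2 - z_k \cdot z_\ell) L_\ell^{(t)} + \sum_{j \neq k,\ell}(z_j \cdot z_k - z_j \cdot z_\ell) L_j^{(t)},
\]
and then invoke Lemma~\ref{lem:normbounds}: the diagonal norms satisfy $p/c_1 \leq \lv z_k\rv^2 \leq c_1 p$, while $|z_i \cdot z_j| \leq c_1(\lv \mu\rv^2 + \sqrt{p \log(n/\delta)})$ for $i \neq j$. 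The parameter assumption $p \geq C \max\{\lv \mu\rv^2 n, n^2 \log(n/\delta)\}$ forces $|z_i \cdot z_j| = O(p/(\sqrt{C}\,n))$ for $i \neq j$, i.e.\ the off-diagonal contributions are vanishingly small compared to the diagonal ones as $C \to \infty$. Using the inductive hypothesis $L_j^{(t)} \leq c L_\ell^{(t)}$ to absorb the residual sum, I expect a drift bound of the form $(z_k - z_\ell) \cdot w^{(t)} \geq p\bigl(\tfrac{1}{2c_1} L_k^{(t)} - 2 c_1 L_\ell^{(t)}\bigr)$ once $C$ is taken large relative to $c$ and $n$.

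With that drift bound, I would set $c := 8 c_1^2$ and split each pair $(k,\ell)$ into two regimes at time $t$. If $L_k^{(t)}/L_\ell^{(t)} \geq c/2 = 4c_1^2$, the drift is nonnegative, the multiplicative factor $\exp(-\alpha (z_k - z_\ell) \cdot w^{(t)})$ is at most $1$, and the ratio does not grow: $L_k^{(t+1)}/L_\ell^{(t+1)} \leq L_k^{(t)}/L_\ell^{(t)} \leq c$. If instead $L_k^{(t)}/L_\ell^{(t)} < c/2$ (which covers all cases where $L_k^{(t)} < L_\ell^{(t)}$), the ratio can grow, but only by a factor $\exp(\alpha |(z_k - z_\ell) \cdot w^{(t)}|)$, which I can force to be at most $2$ by taking $\alpha$ small, keeping the ratio below $c$. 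To pick $\alpha$ once and for all, I would use that for small enough $\alpha$ the exponential risk $R(\vt)$ is monotone non-increasing under gradient descent, so $L_j^{(t)} \leq R(\vt) \leq R(0) = n$ for every $t$, which in turn bounds $|(z_k - z_\ell) \cdot w^{(t)}|$ uniformly by $O(pn^2)$.

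The main obstacle I expect is reconciling the discrete-step dynamics with the clean continuous-time picture, in which the boundary ratio $\Theta(c_1^2)$ is a stable attractor that cannot be crossed. The two-regime split is precisely designed to rule out overshoot: any pair that could push $A_{t+1}^{\max}$ past $c$ must already sit in the restoring zone $[c/2, c]$ at time $t$, where the drift is nonpositive. The delicate bookkeeping is in the interplay between the $O(1/\sqrt{C})$ cross-term errors, the constant $c$, the factor $n$ coming from bounding $\sum_{j \neq k,\ell} L_j^{(t)}$, and the step size $\alpha$ --- all of which must be balanced so that a single choice of constants works simultaneously at every iteration.
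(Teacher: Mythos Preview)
Your proposal is correct and follows essentially the same approach as the paper: induction on $t$, a multiplicative update for the loss ratio, the norm and inner-product bounds of Lemma~\ref{lem:normbounds}, and a two-case split (large ratio $\Rightarrow$ drift contracts, using the inductive hypothesis on the cross terms; small ratio $\Rightarrow$ bounded growth via $R(\vt)\le n$ and small $\alpha$). The only differences from the paper's proof are cosmetic---your threshold and final constant are $4c_1^2$ and $8c_1^2$ rather than the paper's $2c_1^2$ and $4c_1^2$, and your crude bound $|(z_k-z_\ell)\cdot w^{(t)}|=O(pn^2)$ in the small-ratio case can in fact be sharpened to $O(pn)$ using $\sum_j L_j^{(t)}\le n$, but either suffices.
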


\begin{proof} 
$\Atmax$ is the maximum ratio between a pair of samples at iteration $t$. 
Let $c_1$ be the constant $c\ge 1$ from Lemma~\ref{lem:normbounds}.
We will prove that 
$\Atmax \leq 4 c_1^2$  for all $t \geq 0$ 
by using an inductive argument over the iterations $t$. 

Let us begin by establishing this for the base case, when $t=0$. Since the gradient descent algorithm is initialized at the origin, the loss for any sample $j \in [n]$ is $\exp(-0\cdot z_j) = 1$. Therefore, $A_0^{\mathrm{max}} = 1 < 4c_1^2$.

Assume that the inductive hypothesis holds for some iteration $t$, 
we shall now prove that then it must also hold at iteration $t+1$. 

To simplify notation we shall analyze the ratio between the losses on
the first and the second sample but a similar analysis holds for any
distinct pair. Let $\Gt$ be the loss on sample $z_1 $ and let $\Ht$ be
the loss on sample $z_2$ at the $t^{th}$ iteration. Define $\At :=
\Gt/\Ht$ to be the ratio of the losses at iteration $t$.

By the definition of $\vtone$ as the gradient descent iterate
\begin{align*}
\Atone &= \frac{\exp\left(-\vtone\cdot z_1\right)}{\exp\left(-\vtone\cdot z_2\right)}\\
& = \frac{\exp\left(-(\vt - \alpha \nabla \Rt)\cdot z_1\right)}{\exp\left(-(\vt - \alpha \nabla \Rt)\cdot z_2\right)}\\ &= \frac{\exp\left(-\vt\cdot z_1\right)}{\exp\left(-\vt\cdot z_2\right)} \cdot \frac{\exp(\alpha \nabla \Rt \cdot z_1)}{\exp(\alpha \nabla \Rt \cdot z_2)}\\
&= \At \cdot \frac{\exp(-\alpha \sum_{j \in [n]} z_j\cdot z_1\exp(-\vt \cdot z_j))}{\exp(-\alpha \sum_{j \in [n]} z_j\cdot z_2\exp(-\vt \cdot z_j))}\\
& = \At \cdot \frac{\exp(-\alpha \lv z_1\rv^2\Gt)}{\exp(-\alpha \lv z_2\rv^2 \Ht)}\frac{\exp(-\alpha \sum_{j > 1} z_j\cdot z_1\exp(-\vt \cdot z_j))}{\exp(-\alpha \sum_{j \neq 2} z_j\cdot z_2\exp(-\vt \cdot z_j))}.
\end{align*}
Recalling that $c_1$ is the constant $c$ from Lemma~\ref{lem:normbounds},
by \eqref{event:1}, we have
\begin{align*}
 \frac{p}{c_1} \le \lv z_i \rv^2 \le c_1 p, \quad \text{ for all }i \in [n],
\end{align*}
and \eqref{event:2} gives
\begin{align*}
\lvert z_i \cdot z_j \rvert < c_1 (\lv \mu\rv^2 + \sqrt{p \log(n/\delta)}), \quad \text{ for all } i\neq j \in [n].
\end{align*}
These, combined with the the expression for $\Atone$ above, give
\begin{align*}
&\Atone\\ &= \At \cdot \exp(-\alpha \lv z_1\rv^2\Gt + \alpha \lv z_2\rv^2 \Ht ) \cdot\frac{\exp(-\alpha \sum_{j > 1} z_j\cdot z_1\exp(-\vt \cdot z_j))}{\exp(-\alpha \sum_{j \neq 2} z_j\cdot z_2\exp(-\vt \cdot z_j))}\\
& \le \At  \exp\left(-\alpha p \left(\frac{\Gt}{c_1} - c_1 \Ht\right)\right)
\exp\left(2\alpha c_1 (\lv \mu\rv^2 +\sqrt{p \log(n/\delta)})\sum_{j \in [n]}\exp(-\vt\cdot z_j)\right)
\end{align*}
which implies
\begin{align} \label{eq:upperbound_atone}
\nonumber&\Atone\\ & \leq \At
 \exp\left(-\frac{\alpha H_t p}{c_1} \left(A_t - c_1^2\right) \right)
 \exp\left(
2\alpha c_1 (\lv \mu\rv^2 +\sqrt{p \log(n/\delta)})
    \sum_{j \in [n]}\exp(-\vt\cdot z_j)\right).
\end{align}
Consider two disjoint cases. 

\noindent
\textbf{Case 1 ($\At \le 2 c_1^2$):}   
Using inequality~\eqref{eq:upperbound_atone}
\begin{align*}
\Atone &\le \At \exp\left(-\frac{\alpha  \Ht p }{c_1} (\At - c_1^2)\right) \exp\left(
2\alpha c_1 (\lv \mu\rv^2 +\sqrt{p \log(n/\delta)})
   \sum_{j \in [n]}\exp(-\vt\cdot z_j)\right)\\
& \le \At \exp\left(c_1 \alpha  \Ht p\right) \exp\left(
2\alpha c_1 (\lv \mu\rv^2 +\sqrt{p \log(n/\delta)})
\sum_{j \in [n]}\exp(-\vt\cdot z_j)\right)\\
& \overset{(i)}{\le} \At \exp\left(c_1 \alpha p n\right) \exp\left(
2\alpha c_1 (\lv \mu\rv^2 +\sqrt{p \log(n/\delta)})
n\right) \\
&= \At \exp\left(\alpha (c_1 p+
2 c_1 (\lv \mu\rv^2 +\sqrt{p \log(n/\delta)})
)n \right)\\
& \overset{(ii)}{\le} 2 c_1^2 \exp(1/8) < 4 c_1^2
\end{align*}
where $(i)$ follows since the sum of the losses on all samples is always smaller than the initial loss which is $n$ 
(see Lemma~\ref{l:n.bound})
and $\Ht \le n$, while, $(ii)$ follows as the step-size may be chosen to be at most 
$(8 c_1 (p+
2 (\lv \mu\rv^2 +\sqrt{p \log(n/\delta)})
 \lv \mu \rv^2)n)^{-1}$.

\noindent
\textbf{Case 2 ($\At > 2 c_1^2$) :}  
Reusing inequality~\eqref{eq:upperbound_atone},
\begin{align*}
\Atone & \leq \At
 \exp\left(-\frac{\alpha H_t p}{c_1} \left(A_t - c_1^2\right) \right)
\exp\left(
2\alpha c_1 (\lv \mu\rv^2 +\sqrt{p \log(n/\delta)})
\sum_{j \in [n]}\exp(-\vt\cdot z_j)\right) \\
 & = \At
 \exp\left(-\frac{\alpha H_t p}{c_1} \left(A_t \!-\! c_1^2\right) \right)
\exp\left(
2\alpha c_1 (\lv \mu\rv^2 \!+\!\sqrt{p \log(n/\delta)})
H_t \sum_{j \in [n]} \frac{\exp(-\vt\cdot z_j)}{H_t} \right) \\
 & \leq \At
 \exp\left(-\frac{\alpha H_t p}{c_1} \left(A_t - c_1^2\right) \right)
\exp\left(8 \alpha c_1^3 (\lv \mu\rv^2 +\sqrt{p \log(n/\delta)}) H_t n \right) 
  \;\;\;\;\mbox{(by the IH)} \\
 & = \At \exp\left( - \alpha H_t
      \left( \frac{p}{c_1} \left(A_t - c_1^2\right) 
         - 8 c_1^3 (\lv \mu\rv^2 +\sqrt{p \log(n/\delta)}) n \right) \right)
             \\
 & \leq \At \exp\left( - \alpha H_t
      \left( c_1 p
         - 8 c_1^3 (\lv \mu\rv^2 +\sqrt{p \log(n/\delta)}) n \right) \right).
\end{align*}
Since $p > C \lv\mu \rv^2$ and $p > C n^2 \log (n/\delta)$,
and noting that Lemma~\ref{lem:normbounds} is consistent with
$C$ being arbitrarily large while $c_1$ remains fixed, we have
that, in this case, $\Atone \leq \At$ (as the term in the exponent is non-positive).  This completes the proof
of the inductive step in this case, and therefore the entire proof.

\end{proof}

\subsection{Proof of Lemma~\ref{l:dot.by.norm}} 
Armed with Lemma~\ref{l:loss.ratio}, we now prove
Lemma~\ref{l:dot.by.norm}.

Let us proceed assuming that the event defined in Lemma~\ref{lem:normbounds} occurs, and, in this proof, 
let $c_1$ be the constant $c$ from that lemma. We know that this event occurs with probability at least $1-\delta$.

We have
\begin{align*}
\mu \cdot v^{(t+1)} = \mu \cdot v^{(t)} + \alpha\sum_{k=1}^n (\mu \cdot z_k) \exp\left( -v^{(t)}\cdot z_k\right). 
\end{align*}
Dividing the sum into the clean and noisy examples, we have
\begin{align}
\label{e:clean.noisy}
 \mu \cdot v^{(t+1)} 
& = \mu \cdot v^{(t)}
  + \alpha\sum_{k \in \cC} \left( \mu \cdot z_k \right) \exp\left( -v^{(t)}\cdot z_k\right)\\
  & \qquad \qquad \qquad \qquad \qquad \qquad\nonumber+\alpha\sum_{k \in \cN} \left( \mu \cdot z_k \right) \exp\left( -v^{(t)}\cdot z_k\right). 
\end{align}
Combining~\eqref{event:3}, \eqref{event:4} and \eqref{e:clean.noisy} we infer
\begin{align}
\nonumber
\mu \cdot v^{(t+1)} 
&\ge \mu \cdot v^{(t)} + \frac{\lv \mu \rv^2 \alpha }{2}\sum_{k \in \cC} \exp\left( -v^{(t)}\cdot z_k\right)
 - \frac{3  \lv \mu \rv^2 \alpha }{2}\sum_{k \in \cN} \exp\left( -v^{(t)}\cdot z_k\right) \\ 
\label{e:by.noisy.loss}
&= \mu \cdot v^{(t)} + \frac{\lv \mu \rv^2}{2} \alpha R(v^{(t)})
 -2  \lv \mu \rv^2 \alpha\sum_{k \in \cN} \exp\left( -v^{(t)}\cdot z_k\right).
\end{align}
Since $|\cN| \leq (\eta + c') n$, where $c'$ is an arbitrarily small constant,
if $c_2$ is the constant from Lemma~\ref{l:loss.ratio}, we have
\[
\sum_{k \in \cN} \exp\left( -v^{(t)}\cdot z_k\right) \leq 
c_2 (\eta + c') n \min_k \exp\left( -v^{(t)}\cdot z_k\right)
\leq c_2 (\eta + c') R(v^{(t)})
\leq  R(v^{(t)})/4,
\]
since $\eta \leq 1/C$.
Thus inequality~\eqref{e:by.noisy.loss} implies
\[
\mu \cdot v^{(t+1)} \geq \mu \cdot v^{(t)} + \frac{\lv \mu \rv^2 \alpha }{4 } R(v^{(t)}).
\]
Unrolling this via an induction yields
\[
\mu \cdot v^{(t+1)}
  \geq \frac{\alpha \lv \mu \rv^2}{4} \sum_{m=0}^t R(v^{(m)}) \qquad \text{(since $v^{(0)}=0$).}
\]
Now let us multiply both sides by $\lv w\rv/\lv v_{t+1}\rv$
\[
\lv w \rv \frac{\mu \cdot v^{(t+1)}}{\lv v^{(t+1)} \rv}
  \geq \lv w \rv \frac{\alpha \lv \mu \rv^2 \sum_{m=0}^t R(v^{(m)})}{4 \lv v^{(t+1)} \rv}.
\]
Next, let us take the large-$t$ limit.  Applying Lemma~\ref{l:char.w} to
the left hand side, 
\begin{equation}
\label{e:by.grad.sum}
\mu \cdot w \geq 
 \alpha \lv w \rv \lv \mu \rv^2  
  \lim_{t \rightarrow \infty} \frac{\sum_{m=0}^t R(v^{(m)})}
                                {4 \lv v^{(t+1)} \rv}.
\end{equation}
By definition of the gradient descent iterates
\begin{align*}
\lv v^{(t+1)} \rv
 & = \left\lv \sum_{m=0}^t \alpha \nabla R(v^{(m)}) \right\rv \\
 & \leq \alpha \sum_{m=0}^t \lv \nabla R(v^{(m)}) \rv \\
 & = \alpha \sum_{m=0}^t 
        \left\lv \sum_{k=1}^n z_k \exp(-v^{(m)} \cdot z_k) \right\rv \\
 & \leq \alpha\sum_{m=0}^t 
        \sum_{k=1}^n \exp(-v^{(m)} \cdot z_k) \lv z_k \rv \\
 & \leq \alpha c_1 \sqrt{p} 
   \sum_{m=0}^t R(v^{(m)}).
\end{align*}
This together with inequality~\eqref{e:by.grad.sum} yields
\[
\mu \cdot w \geq  \frac{\lv w \rv \lv \mu \rv^2}{4 c_1 \sqrt{p}},
\]
completing the proof.

\section{Simulations} \label{s:simulations}We experimentally study the behavior of the maximum margin classifier in the overparameterized regime on synthetic data generated according to the Boolean noisy rare-weak model. Recall that this is a model where the clean label $\ty \in \{-1,1\}$ is first generated by flipping a fair coin. Then the covariate $x$ is drawn from a distribution conditioned on $\ty$ such that $s$ of the coordinates of $x$ are equal to $\ty$ with probability $1/2+\gamma$ and the other $p-s$ coordinates are random and independent of the true label. The noisy label $y$ is obtained by flipping $\ty$ with probability $\eta$. In this section the flipping probability $\eta$ is always $0.1$. In all our experiments the number of samples $n$ is kept constant at $100$. 

In the first experiment in Figure~\ref{fig:testerrorwithp} we hold $n$
and the number of relevant attributes $s$ constant and vary the
dimension $p$ for different values of $\gamma$. We find that after an
initial dip in the test error (for $\gamma = 0.2,0.3$) the test error
starts to rise slowly with $p$, as in our upper bounds.

Next, in Figure~\ref{fig:testerrorwiths} we explore the scaling of the test error with the number of relevant attributes $s$ when $n$ and $p$ are held constant. As we would expect, the test error decreases as $s$ grows for all the different values of $\gamma$.

Finally, in Figure~\ref{fig:threshold} we study how the test error
changes when both $p$ and $s$ are increasing when $n$ and $\gamma$ are
held constant. Our results (see~Corollary~\ref{c:boolean}) 
do not guarantee learning when $s = \Theta(\sqrt{p})$ 
(and \citet{jin2009impossibility} proved that learning
is impossible in a related setting, even in the absence of noise);
we
find that the test error remains constant in our experiment in this
setting.   In the cases when $s=p^{0.55}$ and when $s=p^{0.65}$,
slightly beyond this threshold,
the test error approaches the Bayes-optimal error as $p$ gets large in
our experiment. This provides 
experimental evidence that 
the maximum margin algorithm, without explicit regularization or feature
selection, even in the presence of noise, learns with using
a number of relevant variables near the theoretical limit of
what is possible for any algorithm.
\citep[Note that, as emphasized by][the
fraction of relevant variables is going to zero as $p$ increases
in these experiments.]{helmbold2012necessity}

\begin{figure}[t]
\centering
\includegraphics[scale=0.7]{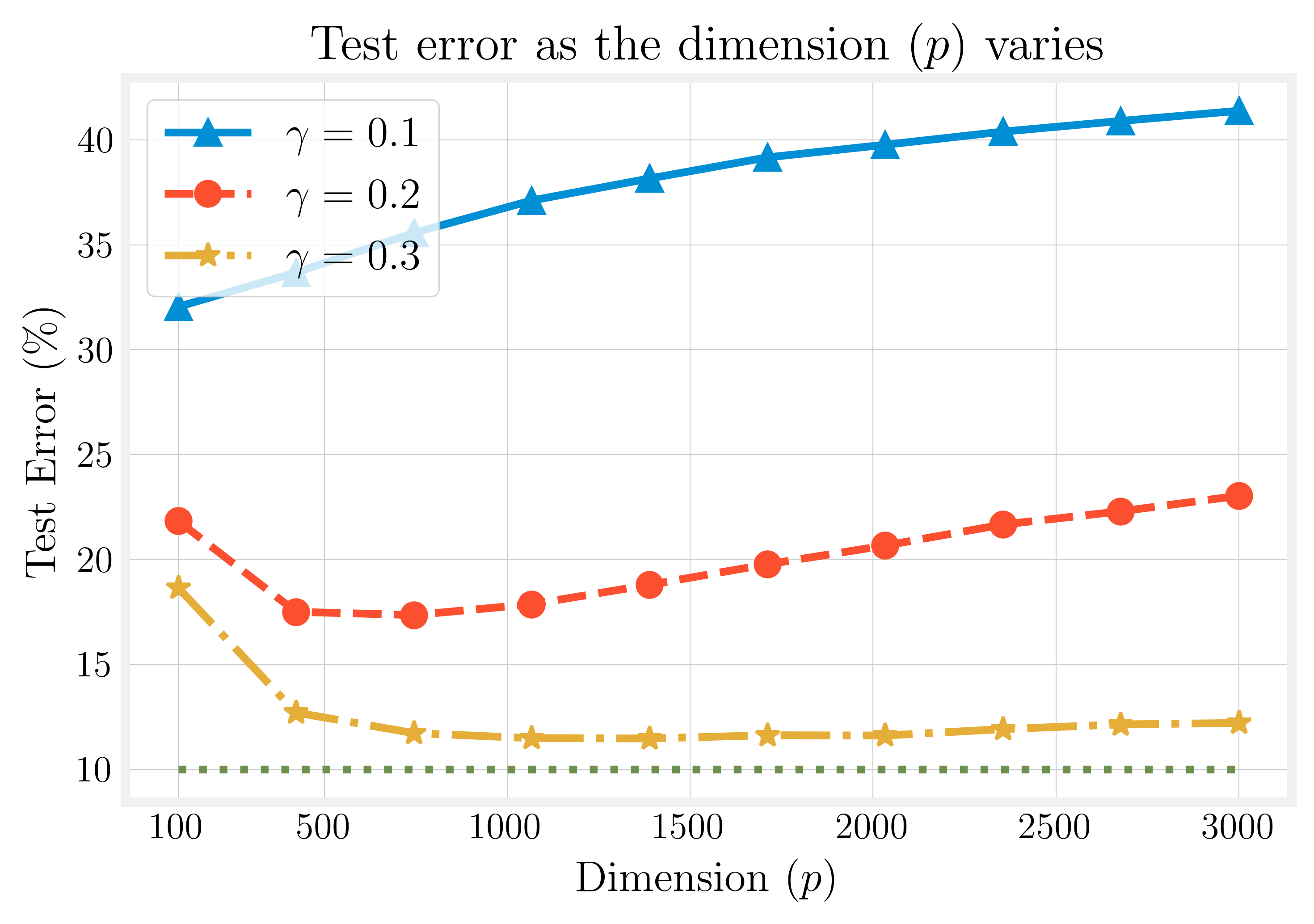}
\caption{\label{fig:testerrorwithp}Plot of the test error versus the dimension of the covariates $p$ for different values of $\gamma$. The number of samples $n=100$ and the number of relevant variables $s=50$ are both kept fixed. The dimension $p$ is varied in the interval $[100,3000]$. The data is generated according to the Boolean noisy rare-weak model. The dotted olive green line represents the noise level ($10\%$). The plot is generated by averaging over $500$ draws of the samples. The train error on all runs was always $0$.}
\end{figure}

\begin{figure}[h]
\centering
\includegraphics[scale=0.7]{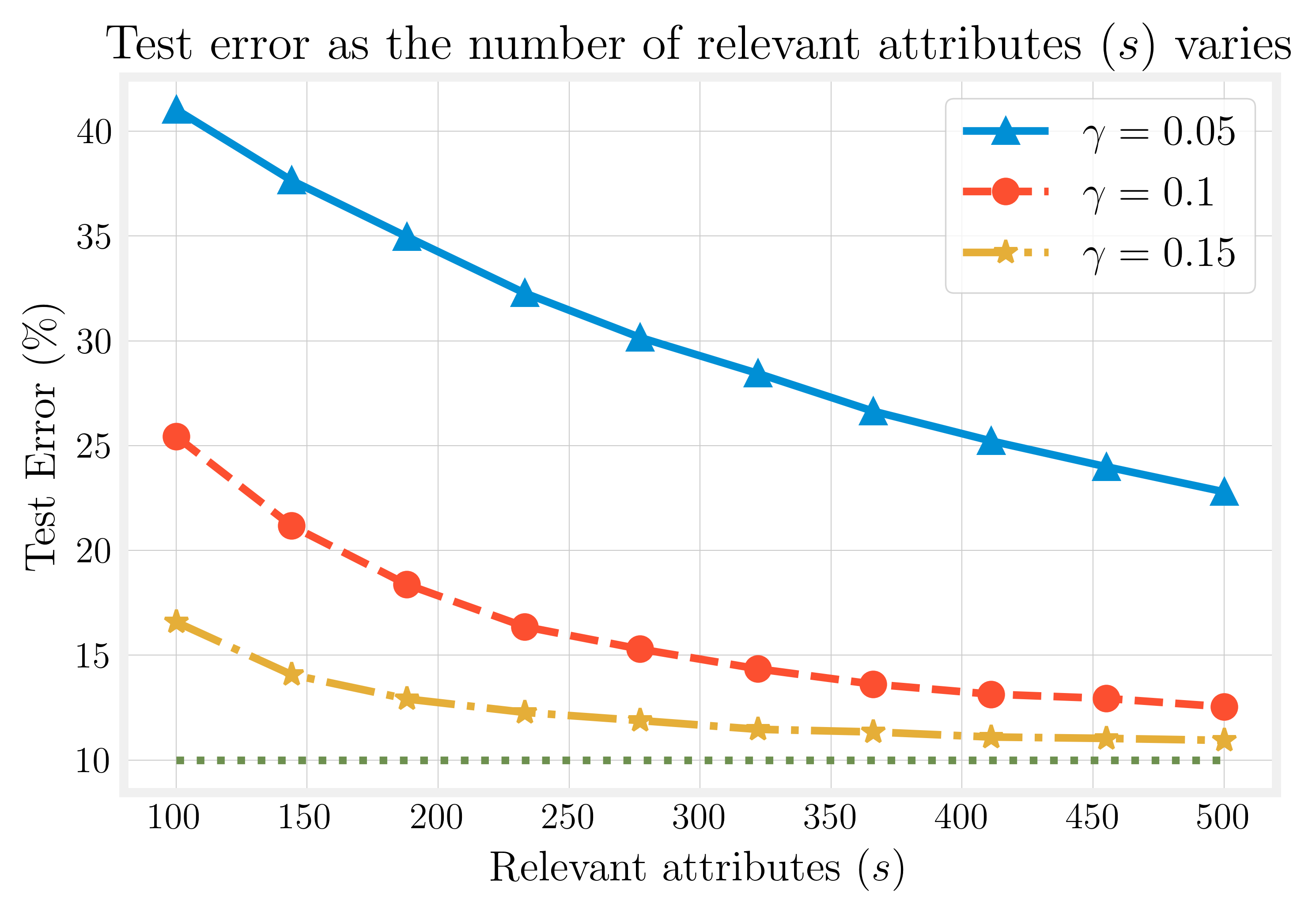}
\caption{\label{fig:testerrorwiths}Plot of the test error versus the number of relevant attributes $s$ for different values of $\gamma$. The number of samples $n=100$ and the dimension $p=500$ are both kept fixed. The dimension $s$ is varied in the interval $[100,500]$. The data is generated according to the Boolean noisy rare-weak model. The dotted olive green line represents the noise level ($10\%$). The plot is generated by averaging over $500$ draws of the samples. The train error on all runs was always $0$.}
\end{figure}

\begin{figure}[h]
\centering
\includegraphics[scale=0.7]{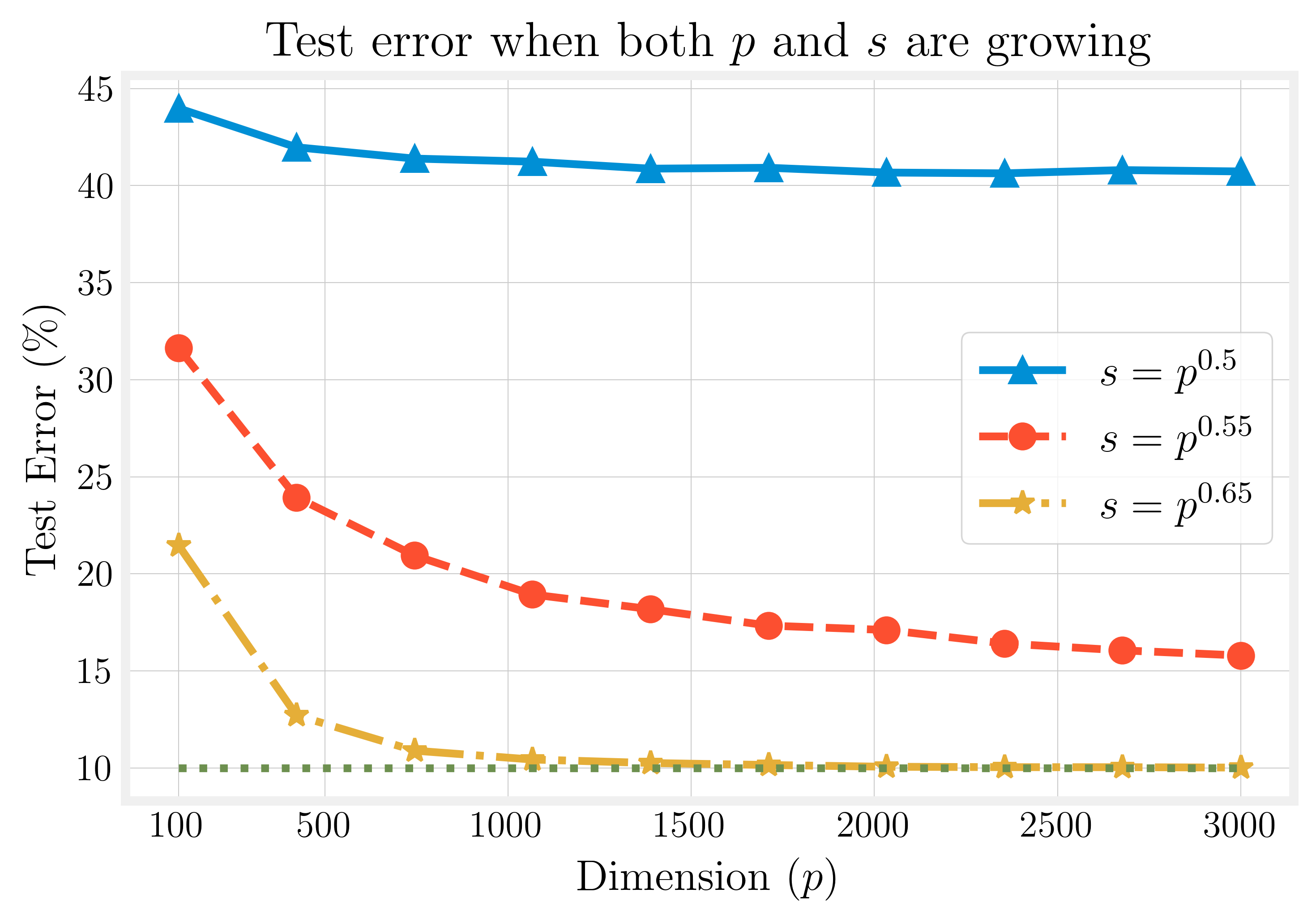}
\caption{\label{fig:threshold}Plot of the test error versus the dimension (p) for different scalings of $s$ with $p$. The number of samples $n=100$ and $\gamma=0.1$ are both held fixed. The dimension $p$ is varied in the interval $[100,3000]$. The data is generated according to the Boolean noisy rare-weak model. The dotted olive green line represents the noise level ($10\%$). The plot is generated by averaging over $500$ draws of the samples. The train error on all runs was always $0$.}
\end{figure}

\section{Additional Related Work}
\label{s:related}

\citet{ng2002discriminative} compared the Naive Bayes algorithm, which
builds a classifier from estimates of class-conditional distributions
using conditional independence assumptions, with discriminative
training of a logistic regressor.  Their main point is that Naive
Bayes converges faster.  Our analysis provides a counterpoint to
theirs, showing that, for a reasonable data distribution that includes
label noise, in the overparameterized regime, unregularized
discriminative training with a commonly used loss function learns a
highly accurate classifier from a constant number of examples.

Analysis of learning with class-conditional Gaussians with the same
covariance structure has been extensively studied in the case that the
number $n$ of training examples is greater than the number $p$ of
parameters \citep[see][]{anderson2003introduction}.  When $p \gg n$,
\citet{bickel2004some} showed that, even when the class-conditional
distributions do not have diagonal covariance matrices, 
behaving as if they are can lead to
improved classification accuracy.  The model that we use is a
generalization of the rare-weak model studied by
\citet{donoho2008higher} (see Section~\ref{s:defs} for details of our
set-up).  The class-conditional distributions studied there have a
standard multivariate normal distribution, while our results hold for
a more general class of sub-Gaussian class-conditional distributions.
More importantly, in order to address the experimental findings of
\citet{zhang2016understanding}, we have have also supplemented the
rare-weak model to include label noise.  Finite-sample bounds for
algorithms using $\ell_1$ penalties, again, in the absence of label
noise have been obtained~\citep{cai2011direct,li2015fast,li2017l1,tony2019high}.
\citet{dobriban2018high} studied regularized classification in the
asymptotic framework where $p$ and $n$ go to infinity together.
\citet{fan2008high} and \citet{jin2009impossibility} proved that
learning with class-conditional Gaussians is impossible when too few
variables are associated with the class designations.  Our analysis
shows that, even in the presence of misclassification noise, in a
sense, the maximum-margin algorithm succeeds up to the edge of the
frontier established by one of the results by \citet{jin2009impossibility}.  \citet{nagarajan2019uniform} used a
data distribution like the distributions considered here, but to
analyze limitations of uniform convergence tools.

The framework studied here also includes as a special case 
the setting studied by
\citet{helmbold2012necessity}, 
with Boolean attributes; again, a key
modification is the addition of misclassification noise.
Also, while the upper bounds of \citet{helmbold2012necessity} 
are for algorithms that perform unweighted 
votes
over selected attributes, here we consider the maximum margin algorithm.
A more refined analysis of learning with conditionally independent
Boolean attributes was carried out by \citet{berend2015finite}.
\citet{kleindessner2018crowdsourcing} studied learning with
conditionally independent Boolean attributes in the presence
of noise---they analyzed tasks other than classification, including
estimating the degree of association between the attributes
(viewed in that work as experts) and the true class designations.

As mentioned above, we consider the case that the data is 
corrupted with label noise.  We 
consider
adversarial label noise
\citep{kearns1994toward,kalai2008agnostically,
klivans2009learning,awasthi2017power,DBLP:conf/nips/Talwar20}.  
In this model,
an adversary is allowed to change the classifications of an arbitrary subset
of the domain whose probability is $\eta$, while leaving the
marginal distribution on the covariates unchanged.  It includes
as a special case the 
heavily studied situation in which classifications are
randomly flipped with probability $\eta$
\citep{angluin1988learning,kearns1998efficient,cesa1999sample,servedio1999pac,kalai2005boosting,long2010random,van2015learning}
along with variants that allow limited dependence
of the probability that a label is corrupted
on the clean example
\citep{lugosi1992learning,natarajan2013learning,scott2013classification,cannings2020classification}.
Adversarial label noise
allows for the possibility that noise is concentrated in
a part of the domain, where noisy examples 
have greater potential to coordinate their
effects; it is a weaker
assumption than even Massart noise 
\citep{massart2006risk,blanchard2008statistical,awasthi2015efficient,diakonikolas2019distribution},
which requires a separate limit 
on the conditional probability of an incorrect
label, given any clean example.
We show that, with sufficient overparameterization,
even in the absence of regularity in the noise,
the algorithm that simply
minimizes the standard
softmax loss without any explicit regularization
enjoys surprisingly strong noise tolerance.

After a preliminary version of this paper was posted on arXiv~\citep{chatterji2020finite}, some related work was published
\citep{wang2020benign,hsu2021proliferation,liang2021interpolating}.


\section{Discussion} 

Even in the presence of misclassification noise, 
with sufficient overparameterization, unregularized minimization
of the logistic loss produces accurate classifiers when the
clean data has well-separated sub-Gaussian class-conditional
distributions.

We have analyzed the case of a linear classifier without a bias term.
In the setting studied here, the Bayes-optimal classifier has a bias
term of zero, and adding analysis of a bias term in the maximum margin
classifier would complicate the analysis without significantly
changing the results.

In the noisy rare-weak model, when $p$ and $s$ scale favorably with
$n$, and $\gamma$ is a constant, the excess risk of the maximum margin
algorithm decreases very rapidly with $n$.
One contributing cause is a ``wisdom of the crowds'' effect that is
present when classifying with conditionally independent attributes: a
classifier can be very accurate, even when the angle between its
normal vector and the optimum is not very small.  For example, if
$100$ experts each predict a binary class, and they are correct
independently with probability $3/4$, a vote over their predictions
remains over 95\% accurate even if we flip the votes of $25$ of them.
(Note that, even in some cases where Lemma~\ref{l:dot.by.norm} implies
accuracy very close to optimal, it may not imply that the cosine of the
angle between $\mu$ and $w$ is anywhere near $1$.)  On the other hand,
the concentration required for successful generalization is robust to
departures from the conditional independence assumption.  Our
assumptions already allow substantial class-conditional dependence
among the attributes, but it may be interesting to explore even weaker
assumptions.

We note that a bound on the accuracy of the maximum margin classifier
with respect to the distribution $\tsfP$ without any label noise is
implicit in our analysis.  (The bound is the same
as Theorem~\ref{t:main}, but without the $\eta$.)

Our bounds show that the maximum margin classifier approaches
the Bayes risk as the parameters go to infinity in various ways.
It would be interesting to characterize the conditions under which
this happens.  A related question is to prove lower bounds in
terms of the parameters of the problem.  Another is to prove
bounds for finite $p$ and $n$ under weaker conditions.

We assume that the distributions of $x - \mu$ and $x - (-\mu)$
are the same.  This is useful in particular for simplifying the
analysis of dot products between examples of opposite classes.
It should not be difficult to extend the analysis meaningfully
to remove this assumption---we use this assumption in this paper
to keep the analysis as simple and clean as possible.

The distribution of $x - \tilde{y}\mu$ comes from a sub-Gaussian distribution
obtained by applying a unitary transformation to a latent variable
with a product distribution.  Concentration theorems have
been proved under many conditions weaker than independence
\citep[see][]{schmidt1995chernoff,dubhashi1998balls,pemmaraju2001equitable}.
Our analysis can be straightforwardly extended using these 
weaker assumptions.

The assumption that the latent variable $q$
satisfies $\E_{q \sim \mathsf{Q}} [\lv q \rv^2] \geq \kappa p$
formalizes the notion that, in a sense,
these variables are truly used, which is required for
concentration.  We believe that smaller values of 
$\E_{q \sim \mathsf{Q}} [\lv q \rv^2]$ are compatible with
successful learning.  We chose this assumption to facilitate a
simple and clean analysis, but an analysis that separates
the dependence on $\E_{q \sim \mathsf{Q}} [\lv q \rv^2]$ is a potential
subject for future work.

The lower bounds on $p$ are needed for concentration, as described
earlier.  We suspect that the requirement that
$p = \Omega(n^2 \log(n/\delta))$ can be improved.  
The bottleneck is in the proof of Lemma~\ref{l:loss.ratio}.
(As we mentioned earlier, a larger value of $p$ promotes
the property that the loss on an example can be reduced
by gradient descent without increasing the loss on other examples very much.)

The lower bound on $\lv \mu \rv^2$ allows us to focus on the
case where most clean examples are classified correctly
by a large margin, which is the case that we want to focus
on.  This could potentially be weakened or removed through
a case analysis, exploiting the fact that a weaker bound
is needed in the case that $\lv \mu \rv$ is small.

Using the generalized Hoeffding bound (Theorem~\ref{thm:hoeffding})
it is not hard to show 
\citep[see][Theorem 1]{helmbold2012necessity}
that, in our setting, the
Bayes optimal classifier has error at most
$\eta + \exp(- c\lv \mu \rv^2)$ for an absolute
constant $c$, and Slud's Lemma gives a similar
lower bound 
\citep[see][]{anthony2009neural}
\citep[see also][inequality~(8)]{helmbold2012necessity}.
Our upper bound of 
$\eta + \exp(- c'\lv \mu \rv^4/p)$ 
for the maximum margin algorithm applied to finite training data
is worse than
this by a factor of $\lv \mu\rv^2/p$ in the exponent.  

Implicit regularization lemmas like the one that was so helpful to us have been
obtained for other problems
\citep[see][]{gunasekar2017implicit,gunasekar2018implicit,woodworth2020kernel,azulay2021implicit}.  We hope
that further
advances in implicit regularization research could be combined with
the techniques of this paper to prove generalization guarantees
for 
interpolating classifiers using richer model classes, including
neural networks.

\subsection*{Acknowledgements} 
We thank anonymous reviewers for their valuable comments
and suggestions. NC gratefully acknowledges the support of the NSF through grants IIS-1619362 and IIS-1909365.

\appendix

\section{Concentration Inequalities}\label{appA}
In this section we begin by presenting a definition of sub-Gaussian and sub-exponential random variables in terms of Orlicz norms. Then we state a version of Hoeffding's inequality and a version of Bernstein's inequality. Finally, we prove Lemma~\ref{lem:normbounds} which implies that a good event which our proofs rely on holds with high probability.

For an excellent reference of sub-Gaussian and sub-exponential concentration inequalities we refer the reader to \citet[Chapter~2]{vershynin2018high}.

\begin{definition}[sub-Gaussian random variable] \label{def:subgaussian}A random variable $\theta$ is sub-Gaussian if 
\begin{align*}
\lv \theta \rv_{\psi_2}:= \inf\left\{t>0: \mathbb{E}[\exp(\theta^2/t^2)]< 2\right\}
\end{align*}
is bounded. Further, $\lv \theta\rv_{\psi_2}$ is defined to be its sub-Gaussian norm.
\end{definition}

We now state general Hoeffding's inequality \citep[see][Theorem~2.6.3]{vershynin2018high} a concentration inequality for a weighted sum of independent sub-Gaussian random variables.
\begin{theorem}[General Hoeffding's inequality] \label{thm:hoeffding}Let $\theta_1,\ldots,\theta_m$ be independent mean-zero sub-Gaussian random variables and $a=(a_1,\ldots,a_m) \in \mathbb{R}^m$. Then, for every $t>0$, we have
\begin{align*}
\Pr\left[\Big\lvert \sum_{i=1}^m a_i \theta_i \Big\rvert \ge t\right]\le 2\exp\left(-\frac{c_2 t^2}{K^2 \lv a\rv^2}\right),
\end{align*}
where $K = \max_i \lv \theta_i \rv_{\psi_2}$ and
$c_2$ is an absolute constant.
\end{theorem}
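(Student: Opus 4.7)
Since General Hoeffding's inequality is a classical result that the paper explicitly cites from \citet[Theorem~2.6.3]{vershynin2018high}, the plan is to reconstruct the standard Chernoff--Cramer argument. The only substantive input is the moment generating function (MGF) bound for centered sub-Gaussians, which is the workhorse behind essentially all concentration inequalities of this flavor.

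The first step is to establish (or simply invoke) the sub-Gaussian MGF lemma: if $\theta$ is mean zero with $\lv \theta \rv_{\psi_2}\le K_0$, then there exists an absolute constant $c_1>0$ such that $\E[\exp(\lambda\theta)]\le\exp(c_1\lambda^2 K_0^2)$ for every real $\lambda$. This is a direct consequence of Definition~\ref{def:subgaussian}: the Orlicz condition $\E[\exp(\theta^2/K_0^2)]\le 2$ yields the polynomial moment bounds $\E[|\theta|^k]\le C_0 K_0^k k^{k/2}$ via Markov, and summing the power series expansion of $\E[\exp(\lambda\theta)]$ (using Stirling's approximation, plus the mean-zero assumption to handle the $k=1$ term) produces the exponential bound.

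The second step is the Chernoff--Cramer argument itself. Let $S:=\sum_{i=1}^m a_i\theta_i$. For any $\lambda>0$, Markov's inequality combined with independence and the MGF lemma applied coordinatewise gives
\begin{align*}
\Pr[S\ge t] \le e^{-\lambda t}\prod_{i=1}^m \E[e^{\lambda a_i \theta_i}]
\le e^{-\lambda t}\prod_{i=1}^m \exp\!\bigl(c_1\lambda^2 a_i^2 \lv \theta_i\rv_{\psi_2}^2\bigr)
\le \exp\!\bigl(c_1\lambda^2 K^2 \lv a\rv^2 - \lambda t\bigr).
\end{align*}
Optimizing in $\lambda$ (choosing $\lambda = t/(2 c_1 K^2 \lv a\rv^2)$) yields the one-sided bound $\Pr[S\ge t]\le \exp(-t^2/(4 c_1 K^2\lv a\rv^2))$. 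Because $-\theta_i$ has the same sub-Gaussian norm as $\theta_i$, the identical argument applied to $-S$ gives the mirror bound on $\Pr[-S\ge t]$, and a union bound produces the two-sided statement with $c_2 := 1/(4c_1)$.

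The main obstacle, and really the only non-routine step, is the sub-Gaussian MGF lemma used in Step 1: it is a purely technical consequence of the Orlicz-norm definition but requires some careful bookkeeping to pass from the Gaussian-like tail condition $\E[\exp(\theta^2/K_0^2)]\le 2$ to a uniform exponential moment bound. Given that the paper cites \citet[Chapter~2]{vershynin2018high} for exactly this type of background, I would invoke that lemma as a black box rather than redo the Orlicz calculation in the appendix.
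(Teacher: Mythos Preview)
Your reconstruction via the Chernoff--Cram\'er method is correct and is indeed the standard route: the MGF lemma for centered sub-Gaussians followed by Markov, independence, and optimization in $\lambda$ is exactly how \citet[Theorem~2.6.3]{vershynin2018high} proves it.

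However, there is nothing to compare against: the paper does \emph{not} prove Theorem~\ref{thm:hoeffding}. It is stated in Appendix~\ref{appA} purely as a cited background result, with the explicit attribution ``\citep[see, e.g.,][Theorem~2.6.3]{vershynin2018high},'' and the exposition immediately moves on to the sub-exponential definition and Bernstein's inequality. So your proposal supplies a proof where the paper intentionally offers none; it is consistent with the cited source, but there is no ``paper's own proof'' for this statement to match or diverge from.
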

A one-sided version of this theorem (upper/lower deviation bound) holds without the factor of $2$ multiplying the exponent on the right hand side.

\begin{definition}[sub-exponential random variable] \label{def:subexp}A random variable $\theta$ is 
said to be
sub-exponential if 
\begin{align*}
\lv \theta \rv_{\psi_1}:= \inf\left\{t>0: \mathbb{E}[\exp(\lvert \theta\rvert/t)< 2]\right\}
\end{align*}
is bounded. Further, $\lv \theta\rv_{\psi_1}$ is defined to be its sub-exponential norm.
\end{definition}

We shall also use Bernstein's inequality \citep[see][Theorem~2.8.1]{vershynin2018high} a concentration inequality for a sum of independent sub-exponential random variables.

\begin{theorem}[Bernstein's inequality] \label{thm:bernstein}
For independent mean-zero sub-exponential random variables $\theta_1,\ldots,\theta_m$, for every $t>0$, we have
\begin{align*}
\Pr\left[\Big\lvert \sum_{i=1}^m  \theta_i \Big\rvert \ge t\right]\le 2\exp\left(-c_1 \min\left\{\frac{t^2}{\sum_{i=1}^m \lv \theta_i\rv_{\psi_1}^2},\frac{t}{\max_i \lv \theta_i \rv_{\psi_1}}\right\}\right),
\end{align*}
where $c_1$ is
an absolute constant.
\end{theorem}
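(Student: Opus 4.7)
The plan is the classical Chernoff--Cramér route. I would first establish a moment generating function bound for a single mean-zero sub-exponential variable, then use independence to multiply the MGFs, and finally optimise the resulting exponential tail bound over a free Chernoff parameter $\lambda$, taking a union bound over the two tails at the end.

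The key single-variable lemma states that if $\theta$ has mean zero and $\lv \theta \rv_{\psi_1} \le K$, then there exist absolute constants $c_0, C_0 > 0$ such that $\E[e^{\lambda \theta}] \le \exp(C_0 K^2 \lambda^2)$ for all $|\lambda| \le c_0/K$. The proof extracts moment bounds $\E[|\theta|^k] \le 2\,k!\,K^k$ from the defining inequality $\E[\exp(|\theta|/K)] \le 2$ by matching Taylor coefficients of the exponential, then expands $e^{\lambda \theta}$, kills the linear term using $\E[\theta]=0$, and sums a geometric series in $|\lambda K|$ to produce a $1 + C_0 K^2 \lambda^2$ bound, followed by $1+x \le e^x$.

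Applying Markov with parameter $\lambda \in (0,\, c_0/\max_i \lv \theta_i \rv_{\psi_1}]$ and independence then gives
\[
\Pr\Bigl[\sum_{i=1}^m \theta_i \ge t \Bigr] \le e^{-\lambda t} \prod_{i=1}^m \E[e^{\lambda \theta_i}] \le \exp\Bigl(-\lambda t + C_0 \lambda^2 \sum_{i=1}^m \lv \theta_i \rv_{\psi_1}^2 \Bigr).
\]
I would minimise the right-hand side in $\lambda$ over the admissible interval. The unconstrained minimiser is $\lambda^\star = t/(2C_0 \sum_i \lv \theta_i \rv_{\psi_1}^2)$. If $\lambda^\star$ lies in the admissible range (small $t$), substitution yields the sub-Gaussian branch $\exp(-c_1 t^2/\sum_i \lv \theta_i \rv_{\psi_1}^2)$; if instead $\lambda^\star$ exceeds $c_0/\max_i \lv \theta_i \rv_{\psi_1}$ (large $t$), one sets $\lambda$ equal to the boundary, and the dominant $-\lambda t$ term produces the sub-exponential branch $\exp(-c_1 t/\max_i \lv \theta_i \rv_{\psi_1})$. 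The smaller of the two upper bounds is exactly the minimum appearing in the statement of the theorem.

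For the two-sided deviation $|\sum_i \theta_i| \ge t$, I would repeat the argument with $\{-\theta_i\}$ in place of $\{\theta_i\}$ (these have the same sub-exponential norms) and union-bound the two tails, producing the factor $2$ in front of the exponential. The chief technical obstacle is calibrating the constants $c_0$ and $C_0$ in the MGF lemma so that the tail of the Taylor series converges geometrically and the residual is dominated by the quadratic term on the admissible range of $\lambda$; once these constants are pinned down, the Chernoff optimisation and the two-case analysis are routine.
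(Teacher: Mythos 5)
Your proof is correct: it is the standard Chernoff--Cram\'er argument (MGF bound for a single centered sub-exponential variable on a restricted range of $\lambda$, product over independent coordinates, two-case optimization yielding the minimum of the sub-Gaussian and sub-exponential branches, and a union bound over the two tails). The paper does not prove this statement itself but cites it from Vershynin's \emph{High-Dimensional Probability} (Theorem~2.8.1), whose proof proceeds exactly as you describe.
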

Again note that a one-sided version of this inequality holds without the factor of $2$ multiplying the exponent on the right hand side.

%
%
%

We break the proof of Lemma~\ref{lem:normbounds} into different parts,
which are proved in separate lemmas. Lemma~\ref{lem:normbounds} then follows by a union bound.

\begin{lemma}
\label{l:proofofpart1}
For all $\kappa > 0$, there is a $c \geq 1$ such that, for all large enough $C$, with probability at least $1-\delta/6$, for all $k \in [n]$,$$\frac{p}{c} \le \lv z_k \rv^2 \le c p.$$
\end{lemma}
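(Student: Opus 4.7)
The plan is to decompose $\lv z_k \rv^2$ into three pieces and concentrate each one, then union bound over $k$. Since $U = I$, $y_k \in \{-1,1\}$, and $x_k = q_k + \tilde{y}_k \mu$, we have
\[
\lv z_k \rv^2 = \lv x_k \rv^2 = \lv q_k \rv^2 + 2 \tilde{y}_k (q_k \cdot \mu) + \lv \mu \rv^2.
\]
The dominant term is $\lv q_k \rv^2$: by the sub-Gaussian assumption on each coordinate $q_{k,i}$, there is an absolute constant so that $\E[q_{k,i}^2] \le C_0$, giving $\E[\lv q_k\rv^2] \le C_0 p$, while the hypothesis $\E[\lv q_k\rv^2] \ge \kappa p$ is assumed. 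The goal is then to show that $\lv q_k \rv^2$ stays within a constant factor of its mean, and that the other two terms are lower order compared to $p$.

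First I would apply Bernstein's inequality (Theorem~\ref{thm:bernstein}) to $\sum_i (q_{k,i}^2 - \E[q_{k,i}^2])$. Each summand is sub-exponential with norm bounded by an absolute constant (square of a sub-Gaussian with norm $\le 1$), so Bernstein gives
\[
\Pr\bigl[\bigl|\lv q_k\rv^2 - \E[\lv q_k\rv^2]\bigr| \ge \kappa p/2\bigr] \le 2\exp(-c_0 \kappa^2 p).
\]
Under assumption (A.3), $p \ge C \log(n/\delta)$, so for $C$ large enough this is $\le \delta/(12 n)$, and a union bound over $k$ gives $\lv q_k\rv^2 \in [\kappa p/2,\, (C_0 + \kappa/2) p]$ simultaneously for all $k$ with probability at least $1 - \delta/12$.

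Next I would handle the cross term $q_k \cdot \mu$ via the general Hoeffding inequality (Theorem~\ref{thm:hoeffding}): since $q_k$ has independent zero-mean sub-Gaussian coordinates with norm at most $1$, and $\mu$ is deterministic,
\[
\Pr\bigl[|q_k \cdot \mu| \ge t\bigr] \le 2\exp\bigl(-c_2 t^2/\lv \mu\rv^2\bigr).
\]
Choosing $t = c_3 \lv \mu\rv \sqrt{\log(n/\delta)}$ for an appropriate absolute $c_3$ makes this at most $\delta/(12n)$. The assumption $p \ge C\lv\mu\rv^2 n$ combined with $\lv\mu\rv^2 \ge C\log(n/\delta)$ forces
\[
|q_k \cdot \mu| \;\le\; c_3 \lv\mu\rv \sqrt{\log(n/\delta)} \;\le\; \frac{c_3}{\sqrt{C}}\lv\mu\rv^2 \;\le\; \frac{c_3}{C^{3/2} n}\, p,
\]
which is $o(p)$ for $C$ large, and likewise $\lv\mu\rv^2 \le p/C$. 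Thus the cross term plus $\lv\mu\rv^2$ is at most $\kappa p/4$ for $C$ large enough.

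Combining the bounds yields $\lv z_k\rv^2 \ge \kappa p/4$ and $\lv z_k\rv^2 \le (C_0 + 1)p$ uniformly in $k$, after a union bound costing at most $\delta/6$. Setting $c := \max(4/\kappa,\, C_0+1)$ completes the proof. The only mildly delicate step is ensuring the Bernstein bound kicks in under the given lower bound on $p$; everything else reduces to picking $C$ large compared to the absolute constants $C_0$, $c_2$, $c_3$, $1/\kappa$.
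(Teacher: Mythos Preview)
Your proof is correct and follows essentially the same route as the paper: apply Bernstein's inequality to $\lv q_k\rv^2$ (which the paper writes as $\lv z_k - \mu\rv^2$ for clean samples) and then argue that the contribution of $\mu$ is lower order because $\lv\mu\rv^2 \le p/C$. The only cosmetic difference is that the paper handles the shift by $\mu$ deterministically via Young's inequality ($\lv z_k\rv^2 \le 2\lv z_k-\mu\rv^2 + 2\lv\mu\rv^2$ and the reverse), so it never needs your separate Hoeffding step for the cross term $q_k\cdot\mu$; this saves one concentration argument and one union-bound contribution, but your version is equally valid.
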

\begin{proof}
For any \emph{clean} sample $z_i$, the random variables $(z_{ij}-\mu_j)^2$ are sub-exponential with norm $$\lv (z_{ij}-\mu_j)^2\rv_{\psi_1} \le  \lv z_{ij}-\mu_j \rv_{\psi_2}^2 \le 1.$$ After centering, the sub-exponential norm of the zero-mean random variable $(z_{ij}-\mu_j)^2-\mathbb{E}[(z_{ij}-\mu_j)^2]$ is at most a constant \citep[see][Exercise~2.7.10]{vershynin2018high}. Therefore, by Bernstein's inequality, there is an absolute constant 
$c_0$ such that
\begin{align*}
\Pr[\lvert \lv z_i -\mu \rv_2^2 -\mathbb{E}[\lv z_i -\mu \rv_2^2] \rvert \ge t] \le  2\exp\left(-c_0\min\left\{\frac{t^2}{p},t\right\}\right).
\end{align*}
By setting $t=\kappa p/2$
\begin{align*}
\Pr[\lvert \lv z_i -\mu \rv_2^2 -\mathbb{E}[\lv z_i -\mu \rv_2^2] \rvert \ge \kappa p/2] \le \frac{\delta}{6n},
\end{align*}
since $p\ge C\log(n/\delta)$ for a large constant $C$. 
Recall that by assumption, $\kappa p \le\mathbb{E}[\lv z_i -\mu \rv_2^2]\le3p$, where the upper bound follows from the
assumption that the components of $q$ have sub-Gaussian norm at
most $1$. 
Recalling that $\kappa \leq 1$,
\begin{align}
\label{e:near.mean}
\frac{\kappa p}{2}\le\lv z_i - \mu \rv^2 \le 4 p
\end{align}
with probability at least $1-\delta/(6n)$. 

By Young's inequality for products, $\lv z_i-\mu \rv^2 \le 2\lv z_i \rv^2 + 2\lv \mu \rv^2$. 
Also recall that by assumption $\lv \mu\rv^2 < p/C$. Combining this with the left hand side in the display above, for large enough
$C$, we have
\begin{align*}
\lv z_i \rv^2 \ge \frac{1}{2}\left(\frac{\kappa p}{2}-2\lv \mu\rv^2\right) 
  > \frac{\kappa p}{8}.
\end{align*}
Again by Young's inequality $\lv z_i \rv^2 = \lv z_i - \mu + \mu \rv^2 \le 2\lv z_i - \mu\rv^2 +2\lv \mu\rv^2$. Therefore, 
\begin{align*}
\lv z_i \rv^2 \le 2\lv z_i -\mu\rv^2 + 2\lv \mu \rv^2 \le 8 p + 2\lv \mu\rv^2 < 10 p,
\end{align*}
with the same probability. 

A similar argument also holds for all noisy samples by considering the random variables $(z_k-(-\mu))$. Hence, by taking a union bound over all samples completes the proof.
\end{proof}

\begin{lemma}\label{l:proofofpart2}
There is a $c \geq 1$ such that,
for all large
enough $C$,
with probability at least $1-\delta/6$, for all $i\neq j \in [n]$,
\begin{align*}
\lvert z_i \cdot z_j\rvert < c(\lv \mu \rv^2 + \sqrt{p \log(n/\delta)}).\label{event:2}\\
\end{align*}
\end{lemma}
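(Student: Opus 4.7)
The plan is to write each $z_k$ in the form $z_k = \tilde{q}_k + \epsilon_k \mu$, where $\tilde{q}_k$ is a zero-mean sub-Gaussian vector with independent coordinates of sub-Gaussian norm at most $1$, and $\epsilon_k \in \{-1,+1\}$ depending on whether $k$ is clean or noisy. Concretely, using $U = I$ and $x_k = q_k + \tilde{y}_k \mu$, one has $z_k = y_k x_k = (y_k \tilde{y}_k) \tilde{y}_k q_k + (y_k \tilde{y}_k) \mu$, so with $s_k := y_k \tilde{y}_k \in \{-1, 1\}$ and $\tilde{q}_k := s_k \tilde{y}_k q_k$ (still a product of independent zero-mean sub-Gaussians of norm at most $1$) we get $z_k = \tilde{q}_k + s_k \mu$. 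Expanding,
\begin{align*}
z_i \cdot z_j = \tilde{q}_i \cdot \tilde{q}_j + s_i (\mu \cdot \tilde{q}_j) + s_j (\mu \cdot \tilde{q}_i) + s_i s_j \lv \mu \rv^2.
\end{align*}
So $|z_i \cdot z_j| \leq |\tilde{q}_i \cdot \tilde{q}_j| + |\mu \cdot \tilde{q}_i| + |\mu \cdot \tilde{q}_j| + \lv \mu \rv^2$, and it suffices to bound each of the first three terms by $O(\sqrt{p \log(n/\delta)})$ with high probability, uniformly over pairs.

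For $|\mu \cdot \tilde{q}_i|$, condition on $\mu$ (which is a fixed parameter, not random). Since the coordinates of $\tilde{q}_i$ are independent zero-mean sub-Gaussians of norm at most $1$, general Hoeffding (Theorem~\ref{thm:hoeffding}) gives $|\mu \cdot \tilde{q}_i| \leq c_0 \lv \mu \rv \sqrt{\log(n/\delta)}$ with probability at least $1 - \delta/(24 n)$. Since assumption (A.3) gives $\log(n/\delta) \leq p$, we have $\lv \mu \rv \sqrt{\log(n/\delta)} \leq \tfrac{1}{2}(\lv \mu \rv^2 + \log(n/\delta)) \leq \tfrac{1}{2}(\lv \mu \rv^2 + \sqrt{p \log(n/\delta)})$, which is already of the desired form.

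For $|\tilde{q}_i \cdot \tilde{q}_j|$ with $i \neq j$, condition on $\tilde{q}_j$. Then $\tilde{q}_i \cdot \tilde{q}_j$ is a weighted sum of independent zero-mean sub-Gaussians with coefficients $\tilde{q}_{j,1}, \dots, \tilde{q}_{j,p}$, so by the one-sided version of general Hoeffding, with probability at least $1 - \delta/(24 n^2)$,
\begin{align*}
|\tilde{q}_i \cdot \tilde{q}_j| \leq c_0 \lv \tilde{q}_j \rv \sqrt{\log(n/\delta)}.
\end{align*}
The proof of Lemma~\ref{l:proofofpart1} already established $\lv z_k - \epsilon_k \mu \rv^2 = \lv \tilde{q}_k \rv^2 \leq 4 p$ for every $k$ with probability at least $1 - \delta/6$ (the estimate~\eqref{e:near.mean} holds for noisy samples by the same Bernstein argument applied to $(z_{kj} + \mu_j)^2$). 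On that event, $\lv \tilde{q}_j \rv \leq 2\sqrt{p}$, so $|\tilde{q}_i \cdot \tilde{q}_j| \leq 2 c_0 \sqrt{p \log(n/\delta)}$.

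Finally I union bound the three kinds of deviation events over the at most $n^2$ pairs $(i,j)$, together with the event $\lv \tilde{q}_j \rv \leq 2\sqrt{p}$ from Lemma~\ref{l:proofofpart1}, to get a total failure probability below $\delta/6$. Collecting terms yields $|z_i \cdot z_j| \leq c (\lv \mu \rv^2 + \sqrt{p \log(n/\delta)})$ simultaneously for all $i \neq j$. The main subtlety is not a hard calculation but a bookkeeping point: treating $\tilde{q}_i \cdot \tilde{q}_j$ by conditioning on $\tilde{q}_j$ requires a deterministic bound on $\lv \tilde{q}_j \rv$, so one must chain this argument after the $\lv z_k \rv^2 \leq c p$ estimate rather than prove the two bounds in isolation.
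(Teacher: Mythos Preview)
Your proof is correct and follows essentially the same route as the paper: center each $z_k$ as $\tilde q_k+\epsilon_k\mu$ (the paper's $\xi_k$), expand $z_i\cdot z_j$, bound the centered cross term $|\tilde q_i\cdot\tilde q_j|$ by conditioning on one factor after controlling its norm via the estimate~\eqref{e:near.mean}, and handle the $\mu\cdot\tilde q_i$ terms with Hoeffding. The only cosmetic differences are that you reuse the norm bound from Lemma~\ref{l:proofofpart1} instead of re-deriving it, and you control $|\mu\cdot\tilde q_i|$ via AM--GM rather than invoking Assumption~(A.4) directly to get $O(\lv\mu\rv^2)$; neither changes the argument.
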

\begin{proof}
First, let us condition on the division of $\{ 1,\ldots,n \}$ into
clean points $\cC$ and noisy points $\cN$.  After this, for each
$i \in \cC$, $\E[z_i] = \mu$ (where the expectation is conditioned on $z_i$ being clean), and for each $i \in \cN$,
$\E[z_i] = -\mu$.  For each $i$, let $\xi_i := z_i - \E[z_i]$.  
The same logic that proved \eqref{e:near.mean}, together with
a union bound, yields
\begin{equation}
\label{e:norms.small}
\Pr\left[\exists i \in [n],\; \lv \xi_j \rv > 2 \sqrt{p} \right] \leq \frac{\delta}{24}.
\end{equation}

For any pair $i,j \in [n]$ of indices of examples, we have
\begin{align}
\label{e:separate}
\Pr\left[\lvert \xi_i \cdot \xi_j \rvert \ge t\right] 
 \leq \Pr\big[\lvert \xi_i \cdot \xi_j \rvert \ge t \;\big|\; 
      \lv \xi_j \rv \leq 2 \sqrt{p} \big]
      + \Pr\left[ \lv \xi_j \rv > 2 \sqrt{p} \right].
\end{align}
If we regard $\xi_j$ as fixed, and only $\xi_i$ as random,
Theorem~\ref{thm:hoeffding} gives
\[
\Pr\big[\lvert \xi_i \cdot \xi_j \rvert \ge t \big]
 \leq 2\exp\left( -c_2 \frac{ t^2 }{ \lv \xi_j \rv^2 } \right).
\]
Thus,
\[
\Pr\big[\lvert \xi_i \cdot \xi_j \rvert \ge t \;\big|\; 
      \lv \xi_j \rv \leq 2 \sqrt{p} \big]
  \leq 2\exp\left( -c_2 \frac{ t^2 }{ 4 p } \right)
  = 2\exp\left( -c_3 \frac{ t^2 }{ p } \right)
\]
for $c_3 = c_2/4$.
Substituting into inequality~\eqref{e:separate}, we infer
\[
\Pr\left[\lvert \xi_i \cdot \xi_j \rvert \ge t\right]
 \leq 2\exp\left( -c_3 \frac{ t^2 }{ p } \right)
    + \Pr\left[ \lv \xi_j \rv > 2 \sqrt{p} \right].
\]
Taking a union bound over all pairs for the first term, and all
individuals for the second term, we get
\[
\Pr\left[\exists i \neq j \in [n],\; \lvert \xi_i \cdot \xi_j \rvert \ge t\right]
 \leq 2n^2 \exp\left( -c_3 \frac{ t^2 }{ p } \right)
    + \Pr\left[\exists j \in [n],\; \lv \xi_j \rv > 2 \sqrt{p} \right].
\]
Choosing $t = c \sqrt{p \log (n/\delta)}$ for 
a large enough value of $c$, 
we have
\[
\Pr\left[\exists i \neq j \in [n],\; \lvert \xi_i \cdot \xi_j \rvert \ge c \sqrt{p \log (n/\delta)}\right]
 \leq \frac{\delta}{24} + \Pr\left[ \exists j \in [n],\; \lv \xi_j \rv > 2 \sqrt{p} \right].
\]
Applying \eqref{e:norms.small}, we get
\begin{equation}
\label{e:all.xis}
\Pr\left[\exists i \neq j \in [n], \; \lvert \xi_i \cdot \xi_j \rvert \ge c  \sqrt{p \log (n/\delta)}\right]
 \leq \frac{\delta}{12}.
\end{equation}
For any $i$, clean or noisy, Hoeffding's inequality implies
\begin{align*}
\Pr\left[\lvert \mu\cdot z_i \rvert > 2\lv \mu\rv^2\right] 
   < 2\exp\left(-c_2 \frac{ 4 \lv \mu \rv^4 }{ \lv \mu \rv^2} \right)
   = 2\exp\left(- 4 c_2 c^2 \lv \mu \rv^2\right).
\end{align*}
Since $\lv \mu \rv^2 \geq  C \log(n/\delta)$, this implies
\begin{align*}
\Pr\left[\lvert \mu\cdot z_i \rvert > 2 \lv \mu\rv^2\right] < \frac{\delta}{12n}.
\end{align*}
Therefore, by taking a union bound over all $i \in \{ 1,\ldots,n \}$
\begin{align} \label{eq:normconc}
\Pr\left[\exists i, \; \lvert \mu\cdot z_i \rvert > 2\lv \mu\rv^2\right] < \delta/12.
\end{align}
Both the events in \eqref{e:all.xis} and \eqref{eq:normconc} will simultaneously hold with probability at most $\delta/6$. Assume that the event complementary to this bad event occurs, then for any distinct pair $z_i$ and $z_j$ 
\begin{align*}
\lvert z_i \cdot z_j \rvert
& = \big\lvert (z_i - \E[z_i])\cdot(z_j-\E[z_j]) + \E[z_i] \cdot \E[z_j] + \mu\cdot z_i + \mu\cdot z_j\big\rvert \\
& = \lvert \xi_i \cdot \xi_j + \E[z_i] \cdot \E[z_j] + \mu\cdot z_i + \mu\cdot z_j\rvert \\
&\le  \lvert \xi_i \cdot \xi_j \rvert + \lv \mu \rv^2 + \lvert \mu\cdot z_i \rvert + \lvert \mu\cdot z_j \rvert\\
& \le 5 \lv \mu\rv^2 + c \sqrt{p \log(n/\delta)},
\end{align*}
which completes our proof.
\end{proof}

\begin{lemma}
\label{l:part3}
For all large enough $C$, with probability at least $1 - \delta/6$,
\begin{align*}
\text{for all }k\in \mathcal{C}, \; \lvert \mu \cdot z_k -\lv \mu \rv^2\rvert < \lv \mu\rv^2/2.\\
\end{align*}
\end{lemma}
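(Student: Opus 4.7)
The plan is to rewrite $\mu \cdot z_k$ in terms of the latent noise vector $q_k$, reducing the claim to a sub-Gaussian concentration inequality that I apply via Theorem~\ref{thm:hoeffding}, followed by a union bound over the $n$ samples.

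First, I would unpack the definitions. For a clean index $k \in \mathcal{C}$, we have $y_k = \ty_k$, so $z_k = y_k x_k = \ty_k (U q_k + \ty_k \mu) = \ty_k^2 \mu + \ty_k q_k = \mu + \ty_k q_k$ (recall $U = I$ in this section, and $\ty_k \in \{-1,1\}$). Taking the inner product with $\mu$ gives
\[
\mu \cdot z_k - \lv \mu \rv^2 = \ty_k (\mu \cdot q_k),
\]
so the claim reduces to showing $|\mu \cdot q_k| < \lv \mu \rv^2 / 2$ for every $k \in \mathcal{C}$.

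Next, conditioning on $\ty_k$ (which only flips the sign of each $q_{k,i}$ without changing its sub-Gaussian norm or zero mean), the coordinates of $q_k$ are independent zero-mean sub-Gaussians with sub-Gaussian norm at most $1$. Thus $\mu \cdot q_k = \sum_{i=1}^p \mu_i q_{k,i}$ is a weighted sum to which Theorem~\ref{thm:hoeffding} applies with $K \leq 1$ and $\lv a \rv^2 = \lv \mu \rv^2$. Picking $t = \lv \mu \rv^2/2$ gives
\[
\Pr\left[ |\mu \cdot q_k| \geq \lv \mu \rv^2 / 2 \right] \leq 2 \exp\left( -\frac{c_2 \lv \mu \rv^2}{4} \right).
\]
Since assumption (A.4) guarantees $\lv \mu \rv^2 \geq C \log(n/\delta)$, choosing $C$ large enough relative to $c_2$ makes the right-hand side at most $\delta/(6n)$.

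Finally, a union bound over all $k \in \mathcal{C} \subseteq [n]$ yields the stated probability $\delta/6$. There is no real obstacle here; the only subtlety is the conditioning step on $\ty_k$ to keep the independence structure of $q_k$'s coordinates intact, but this is routine since the sub-Gaussian norm is invariant under sign flips. The argument does not interact with the noise model in any delicate way because clean samples satisfy $y_k = \ty_k$ by definition of $\mathcal{C}$.
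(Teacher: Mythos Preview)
Your proof is correct and follows essentially the same route as the paper: write $\mu\cdot z_k - \lv\mu\rv^2 = \mu\cdot(z_k-\mu)$, apply the general Hoeffding inequality (Theorem~\ref{thm:hoeffding}) with weight vector $\mu$, invoke assumption (A.4) to make the per-sample failure probability at most $\delta/(6n)$, and union-bound. Your conditioning remark on $\ty_k$ is harmless but unnecessary, since $|\ty_k(\mu\cdot q_k)| = |\mu\cdot q_k|$ and $q_k$ is independent of $\ty_k$ by construction; the paper simply writes $\mu\cdot(z_k-\mu)$ directly.
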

\begin{proof}
If $z_k$ is a clean point then, $\mathbb{E}[z_k\lvert k \in \mathcal{C}] = \mu$. Therefore the random variable $\lvert \mu \cdot z_k -\lv \mu\rv^2 \rvert = \lvert \mu \cdot (z_k -\mu) \rvert$ has sub-Gaussian norm at most $\lv \mu\rv$. By applying Hoeffding's inequality,
\begin{align*}
\Pr[\lvert \mu \cdot z_k -\lv \mu \rv^2 \rvert \ge \lv \mu \rv^2 /2] \le \frac{\delta}{6n},
\end{align*}
since $\lv \mu\rv^2 > C \log(n/\delta)$. Taking a union bound over all clean points establishes the claim.
\end{proof}

\begin{lemma}
\label{l:part4}
For all large enough $C$, with probability at least $1 - \delta/6$,
\begin{align*}
\text{for all }k\in \mathcal{N}, \; \lvert \mu \cdot z_k - (-\lv \mu \rv^2)\rvert < \lv \mu\rv^2/2.\\
\end{align*}
\end{lemma}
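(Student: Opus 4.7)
The plan is to mirror the proof of Lemma~\ref{l:part3} almost verbatim, with the only substantive change being the computation of the conditional mean of $z_k$ on the noisy event. First I would unpack the coupling: for any $k \in \mathcal{N}$ we have $y_k = -\ty_k$, and (with $U = I$) $x_k = q_k + \ty_k \mu$, so
\[
z_k \;=\; y_k x_k \;=\; -\ty_k(q_k + \ty_k \mu) \;=\; -\ty_k q_k - \mu.
\]
Since the components of $q_k$ are zero-mean and independent of $\ty_k$, this gives $\E[z_k \mid k \in \mathcal{N}] = -\mu$, and consequently
\[
\mu \cdot z_k - (-\lv \mu \rv^2) \;=\; \mu \cdot (z_k + \mu) \;=\; -\ty_k \,(\mu \cdot q_k),
\]
whose absolute value is exactly $|\mu \cdot q_k|$.

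Next I would apply the general Hoeffding inequality (Theorem~\ref{thm:hoeffding}) to the weighted sum $\mu \cdot q_k = \sum_{i=1}^{p} \mu_i\, q_{k,i}$, whose summands are independent zero-mean sub-Gaussians with sub-Gaussian norm at most $1$. This yields
\[
\Pr\!\left[\,|\mu \cdot q_k| \geq \tfrac{1}{2}\lv \mu\rv^2\,\right] \;\leq\; 2\exp\!\left(-\,c\,\frac{\lv\mu\rv^4/4}{\lv\mu\rv^2}\right) \;=\; 2\exp\!\left(-\tfrac{c}{4}\lv\mu\rv^2\right)
\]
for an absolute constant $c > 0$. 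Invoking assumption (A.4), namely $\lv \mu\rv^2 \geq C\log(n/\delta)$, for $C$ large enough relative to $1/c$ this probability is at most $\delta/(6n)$.

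Finally, a union bound over all $k \in [n]$ (which of course covers the random subset $\mathcal{N}$) upgrades this to the desired uniform statement with probability at least $1 - \delta/6$. The argument goes through whether we first condition on the partition $(\mathcal{C},\mathcal{N})$ induced by the coupling and then take the union bound, or simply bound the event $\{\exists k \in [n] : |\mu \cdot z_k + \lv\mu\rv^2| \geq \lv\mu\rv^2/2 \text{ and } k \in \mathcal{N}\}$ directly. There is no real obstacle here: the only thing to verify carefully is the identity $\E[z_k \mid k \in \mathcal{N}] = -\mu$, and after that the Hoeffding/union-bound steps are identical in form to those in Lemma~\ref{l:part3}.
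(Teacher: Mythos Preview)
Your proposal is correct and follows essentially the same approach as the paper, which simply remarks that the proof is identical to that of Lemma~\ref{l:part3} once one observes that $\E[z_k \mid k \in \mathcal{N}] = -\mu$. Your unpacking of the coupling to obtain the deterministic identity $\mu \cdot z_k + \lv\mu\rv^2 = -\ty_k(\mu \cdot q_k)$ and then applying Hoeffding to $\mu \cdot q_k$ unconditionally (followed by a union bound over all $k$) is in fact a cleaner way to handle the potential dependence of the partition $(\mathcal{C},\mathcal{N})$ on the covariates in the adversarial noise model.
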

\begin{proof}
The proof is the same as the proof of Lemma~\ref{l:part3}, except that for any noisy sample, $z_k$, the conditional mean $\mathbb{E}[z_k \lvert k~\in~\mathcal{N}] = -\mu$. 
\end{proof}

\begin{lemma}
\label{l:part5}
For all $c' > 0$, for all large
enough $C$, with probability $1 - \delta/6$ the number of noisy samples satisfies $\lvert \mathcal{N} \rvert \le  (\eta + c') n$.
\end{lemma}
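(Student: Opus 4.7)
The plan is to reduce the statement to a standard concentration inequality for a sum of bounded independent indicators. Using the coupling of Lemma~\ref{l:coupling}, the examples $(x_k, y_k, \ty_k)$ are i.i.d.\ draws from a joint distribution in which $\Pr[y_k \neq \ty_k] \leq \eta$. Let $\mathbf{1}_k := \mathbf{1}\{y_k \neq \ty_k\}$. Then $|\cN| = \sum_{k=1}^n \mathbf{1}_k$ is a sum of i.i.d.\ Bernoulli random variables, each taking values in $\{0,1\}$ with mean $\E[\mathbf{1}_k] \le \eta$. In particular, $\E[|\cN|] \le \eta n$.

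It therefore suffices to bound the upper deviation of $|\cN|$ from its mean. Since the $\mathbf{1}_k$ are bounded in $[0,1]$, a one-sided Hoeffding inequality yields
\[
\Pr\left[\, |\cN| - \E[|\cN|] \ge c' n \,\right] \le \exp\!\left(- 2 (c')^2 n \right).
\]
Combining with $\E[|\cN|] \le \eta n$ gives
\[
\Pr\left[\, |\cN| > (\eta + c') n \,\right] \le \exp\!\left(- 2 (c')^2 n \right).
\]

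Finally, by assumption (A.2), $n \ge C \log(1/\delta)$. If we take $C$ large enough so that $C \ge \log(6)/(2(c')^2) + 1/(2(c')^2)$, then $2(c')^2 n \ge \log(6/\delta)$, and the right-hand side is at most $\delta/6$. This gives the claimed bound and completes the proof.

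There is no substantive obstacle here: the only mild point to be careful about is that the constant $C$ may depend on $c'$, which is consistent with the quantifier ordering in the lemma statement (``for all $c' > 0$, for all large enough $C$''). No further probabilistic tools beyond Hoeffding's inequality for bounded variables are needed, and no use is made of the geometry of the covariates $x_k$.
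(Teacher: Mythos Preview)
Your proposal is correct and matches the paper's approach exactly: the paper's proof is a single sentence invoking a Hoeffding bound together with the assumption $n \ge C \log(1/\delta)$, and your argument is precisely a spelled-out version of that.
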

\begin{proof}
Since $n \geq C \log(1/\delta)$, this follows from a Hoeffding bound.
\end{proof}

\begin{lemma}
\label{l:sep}
If \eqref{event:1} and \eqref{event:2} hold,
then, if $C$ is large enough,
$(x_1,y_1),\ldots,(x_n,y_n)$ are linearly separable.
\end{lemma}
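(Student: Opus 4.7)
\textbf{Proof proposal for Lemma~\ref{l:sep}.} The plan is to exhibit an explicit linear separator by taking the ``average of labeled examples'' direction $w := \sum_{k=1}^n z_k = \sum_{k=1}^n y_k x_k$ and verifying that $w \cdot z_i > 0$ for every $i \in [n]$. Separability of the dataset is equivalent to the existence of some $w$ satisfying $y_i (w \cdot x_i) > 0$ for all $i$, i.e.\ $w \cdot z_i > 0$ for all $i$, so this choice suffices.

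Expanding, for each fixed $i$ we have
\[
w \cdot z_i = \lv z_i \rv^2 + \sum_{j \neq i} z_j \cdot z_i.
\]
By event \eqref{event:1}, the diagonal term is at least $p/c_1$, where $c_1$ is the constant from Lemma~\ref{lem:normbounds}. By event \eqref{event:2}, each off-diagonal term satisfies $|z_j \cdot z_i| \le c_1(\lv \mu \rv^2 + \sqrt{p \log(n/\delta)})$, so the cross-sum has magnitude at most
\[
(n-1)\, c_1 \bigl(\lv \mu \rv^2 + \sqrt{p \log(n/\delta)}\bigr) \le c_1 n \lv \mu \rv^2 + c_1 \sqrt{n^2 p \log(n/\delta)}.
\]
Now I invoke the assumption $p \geq C \max\{\lv \mu\rv^2 n,\, n^2 \log(n/\delta)\}$ from Section~\ref{s:defs}: it yields $n \lv \mu \rv^2 \le p/C$ and $n^2 \log(n/\delta) \le p/C$, hence $\sqrt{n^2 p \log(n/\delta)} \le p/\sqrt{C}$. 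Consequently,
\[
\Bigl|\sum_{j \neq i} z_j \cdot z_i \Bigr| \le \frac{c_1 p}{C} + \frac{c_1 p}{\sqrt{C}} \le \frac{2 c_1 p}{\sqrt{C}}.
\]

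Combining the two estimates,
\[
w \cdot z_i \ge \frac{p}{c_1} - \frac{2 c_1 p}{\sqrt{C}} = p\left(\frac{1}{c_1} - \frac{2 c_1}{\sqrt{C}}\right).
\]
For all $C$ large enough (in particular any $C \ge 16 c_1^4$), the parenthesized quantity is strictly positive, so $w \cdot z_i > 0$ for every $i \in [n]$ simultaneously, which proves linear separability.

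The only subtlety is keeping track of the dependence on $n$ when summing the cross terms, which is precisely where the lower bound $p = \Omega(\lv \mu \rv^2 n + n^2 \log(n/\delta))$ in Section~\ref{s:defs} is used; apart from that the argument is a direct computation, so I don't expect a real obstacle.
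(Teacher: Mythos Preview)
Your proof is correct and essentially identical to the paper's own argument: both pick the separator $\sum_{k} z_k$, lower-bound the diagonal term by $p/c_1$ via \eqref{event:1}, bound the off-diagonal sum by $c_1 n(\lv \mu\rv^2 + \sqrt{p\log(n/\delta)})$ via \eqref{event:2}, and use Assumption~(A.3) to make the difference positive for large enough $C$. You are simply a bit more explicit about the constant.
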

\begin{proof}
Let $v := \sum_{k=1}^n z_k$.  For each $k$ and any $\delta > 0$,
\begin{align*}
y_k v \cdot x_k 
   & = \sum_i y_i y_k x_i \cdot x_k \\
   & \ge p/c_1 - \sum_{i \neq k} y_i y_k x_i \cdot x_k \\
   & \geq p/c_1 - c_1 n (\lv \mu \rv^2 + \sqrt{p \log(n/\delta)}) \\
   & > 0
\end{align*}
for $p \geq C \max\{\lv \mu\rv^2 n, n^2 \log(n/\delta) \}$, completing
the proof.
\end{proof}

\section{Decreasing Loss}
\label{a:n.bound}

\begin{lemma}
\label{l:n.bound}
For all small enough step sizes $\alpha$, for all iterations $t$,
$R(v^{(t)}) \leq n$.
\end{lemma}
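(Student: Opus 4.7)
I would prove the bound by induction on $t$. The base case is immediate since $R(v^{(0)}) = R(0) = \sum_{k=1}^n \exp(0) = n$. For the inductive step, assume $R(v^{(t)}) \leq n$; I would show that if $\alpha$ is small enough (in a sense I will quantify, depending only on $n$ and the sample), then $R(v^{(t+1)}) \leq R(v^{(t)})$, which together with the hypothesis gives $R(v^{(t+1)}) \leq n$. The strategy is the standard descent-lemma argument from smooth convex optimization, but since the exponential loss $R$ fails to be globally smooth, I will establish smoothness \emph{locally} along the segment from $v^{(t)}$ to $v^{(t+1)}$ by bootstrapping the inductive hypothesis.

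\textbf{Step 1: Local smoothness.} Set $M := \max_{k \in [n]} \lv z_k \rv^2$, which is a finite (data-dependent) quantity. A direct computation gives the Hessian
\[
\nabla^2 R(v) = \sum_{k=1}^n z_k z_k^{\top} \exp(-z_k \cdot v),
\]
so $\lv \nabla^2 R(v) \rv \leq M R(v)$ in operator norm. Similarly, $\lv \nabla R(v) \rv = \lv \sum_k z_k \exp(-z_k \cdot v) \rv \leq \sqrt{M} R(v)$.

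\textbf{Step 2: Bootstrap bound on $R$ along the segment.} Using the gradient bound and the inductive hypothesis, $\lv v^{(t+1)} - v^{(t)} \rv \leq \alpha \sqrt{M}\, R(v^{(t)}) \leq \alpha \sqrt{M}\, n$. For any point $v$ on the line segment between $v^{(t)}$ and $v^{(t+1)}$ and any index $k$, this yields $\lvert (v - v^{(t)}) \cdot z_k \rvert \leq \alpha M n$. Therefore
\[
R(v) = \sum_{k=1}^n \exp(-z_k \cdot v^{(t)}) \exp\bigl(-(v-v^{(t)}) \cdot z_k\bigr) \leq R(v^{(t)}) \exp(\alpha M n) \leq e\, R(v^{(t)}),
\]
provided $\alpha \leq 1/(Mn)$. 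Consequently $\lv \nabla^2 R(v) \rv \leq M R(v) \leq e M n$ on the entire segment, so $R$ is $L$-smooth there with $L := eMn$.

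\textbf{Step 3: Descent lemma.} The standard quadratic upper bound for $L$-smooth functions on the segment gives
\[
R(v^{(t+1)}) \leq R(v^{(t)}) + \nabla R(v^{(t)}) \cdot (v^{(t+1)} - v^{(t)}) + \frac{L}{2} \lv v^{(t+1)} - v^{(t)} \rv^2 = R(v^{(t)}) - \alpha\left(1 - \frac{L \alpha}{2}\right) \lv \nabla R(v^{(t)}) \rv^2.
\]
Choosing $\alpha \leq 2/(eMn)$ (which also satisfies $\alpha \leq 1/(Mn)$ since $2/e < 1$) makes the coefficient nonnegative, so $R(v^{(t+1)}) \leq R(v^{(t)}) \leq n$. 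The threshold $2/(eMn)$ depends only on $n$ and the (fixed) sample, not on $t$, so the same step-size works at every iteration and closes the induction.

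\textbf{Main obstacle.} The only real difficulty is the absence of a global smoothness constant for $R$: the local Lipschitz constant of $\nabla R$ scales with the current loss value. Step~2 resolves this by using the inductive hypothesis to cap the loss uniformly on the current descent segment, which in turn bounds the Hessian norm uniformly and allows the textbook descent lemma to apply with a fixed step size.
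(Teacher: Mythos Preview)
Your proof is correct and follows essentially the same approach as the paper: induction on $t$, bounding $\lv \nabla^2 R(v)\rv$ by a constant times $R(v)$, and invoking the descent lemma for small enough $\alpha$. The one difference is that where the paper simply asserts smoothness on the sublevel set $\{v:R(v)\le n\}$ and defers the details to \cite{ji2018risk}, you explicitly bootstrap the inductive hypothesis in Step~2 to control $R$ (and hence the Hessian) along the entire segment from $v^{(t)}$ to $v^{(t+1)}$; this makes your argument self-contained and closes a subtlety the paper leaves implicit.
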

\begin{proof}
Since $R(v^{(0)}) = \sum_{j \in [n]}\exp\left(0\cdot z_j\right)= n$, it suffices to prove that, for all $t$, $R(v^{(t+1)}) \leq R(v^{(t)})$.
Toward showing this, note that,
if $c_1$ is the constant $c$ from Lemma~\ref{lem:normbounds}, the operator norm of the Hessian at any solution $v$ may be bound as follows:
\begin{align*}
\lv \nabla^2 R(v) \rv
     & = \left\lv \sum_k z_k z_k^{\top} \exp(-v z_k)\right\rv  \\
     & \leq  \sum_k \left\lv z_k z_k^{\top}  \right\rv \exp(-v z_k)  \\
     & \leq c_1 p \sum_k \exp(-v z_k) \\
     & = c_1 p R(v).
\end{align*}
This implies that $R$ is $c_1 p n$-smooth over those
$v$ such that $R(v) \leq n$.  This implies that, for
$\alpha \leq (c_1 p n)^{-1}$, if $R(v^{(t)}) \leq n$ then $R(v^{(t+1)}) \leq R(v^{(t)}) \leq n$ \citep[this can be seen, for example, by mirroring the argument used in Lemma~B.2 in][]{ji2018risk}. The lemma then
follows using induction.
\end{proof}



\printbibliography

@article{helmbold2012necessity,
  title={On the necessity of irrelevant variables},
  author={Helmbold, David P and Long, Philip M},
  journal={Journal of Machine Learning Research},
  volume={13},
  number={Jul},
  pages={2145--2170},
  year={2012}
}

@book{vershynin2018high,
  title={High-dimensional probability: An introduction with applications in data science},
  author={Vershynin, Roman},
  volume={47},
  year={2018},
  publisher={Cambridge University Press}
}

@article{soudry2018implicit,
  title={The implicit bias of gradient descent on separable data},
  author={Soudry, Daniel and Hoffer, Elad and Nacson, Mor Shpigel and Gunasekar, Suriya and Srebro, Nathan},
  journal={Journal of Machine Learning Research},
  volume={19},
  number={1},
  pages={2822--2878},
  year={2018}
}

@inproceedings{hsu2021proliferation,
  title={On the proliferation of support vectors in high dimensions},
  author={Hsu, Daniel and Muthukumar, Vidya and Xu, Ji},
  booktitle={International Conference on Artificial Intelligence and Statistics},
  pages={91--99},
  year={2021}
}

@article{donoho2008higher,
  title={Higher criticism thresholding: Optimal feature selection when useful features are rare and weak},
  author={Donoho, David and Jin, Jiashun},
  journal={Proceedings of the National Academy of Sciences},
  volume={105},
  number={39},
  pages={14790--14795},
  year={2008}
}

@article{montanari2019generalization,
  title={The generalization error of max-margin linear classifiers: High-dimensional asymptotics in the overparametrized regime},
  author={Montanari, Andrea and Ruan, Feng and Sohn, Youngtak and Yan, Jun},
  journal={arXiv preprint arXiv:1911.01544},
  year={2019}
}

@inproceedings{ji2018risk,
  title={The implicit bias of gradient descent on nonseparable data},
  author={Ji, Ziwei and Telgarsky, Matus},
  booktitle={Conference on Learning Theory},
  pages={1772--1798},
  year={2019}
}

@inproceedings{nacson2018stochastic,
  title={Stochastic gradient descent on separable data: Exact convergence with a fixed learning rate},
  author={Nacson, Mor Shpigel and Srebro, Nathan and Soudry, Daniel},
  booktitle={International Conference on Artificial Intelligence and Statistics},
  pages={3051--3059},
  year={2019}
}

@article{kalai2008agnostically,
  title={Agnostically learning halfspaces},
  author={Kalai, Adam Tauman and Klivans, Adam R and Mansour, Yishay and Servedio, Rocco A},
  journal={SIAM Journal on Computing},
  volume={37},
  number={6},
  pages={1777--1805},
  year={2008}
}

@book{lindvall2002lectures,
  title={Lectures on the coupling method},
  author={Lindvall, Torgny},
  year={2002},
  publisher={Courier Corporation}
}

@misc{Das11coupling,
title="Probability and Computation, Lecture 3",
author="Daskalakis, Constantinos",
year=2011,
howpublished = "\url{http://people.csail.mit.edu/costis/6896sp11/lec3s.pdf}",
note="Scribes: A. Chiesa and Z. Zhu.  (Downloaded 4/13/20)."
}

@book{duda2012pattern,
  title={Pattern classification},
  author={Duda, Richard O and Hart, Peter E and Stork, David G},
  year={2012},
  publisher={John Wiley \& Sons}
}

@article{bartlett2019benign,
  title={Benign overfitting in linear regression},
  author={Bartlett, Peter L and Long, Philip M and Lugosi, G{\'a}bor and Tsigler, Alexander},
  journal={Proceedings of the National Academy of Sciences},
  volume={117},
  number={48},
  pages={30063--30070},
  year={2020}
}

@inproceedings{belkin2018overfitting,
  title={Overfitting or perfect fitting? {R}isk bounds for classification and regression rules that interpolate},
  author={Belkin, Mikhail and Hsu, Daniel J and Mitra, Partha},
  booktitle={Advances in Neural Information Processing Systems},
  pages={2300--2311},
  year={2018}
}

@inproceedings{belkin2019does,
  title={Does data interpolation contradict statistical optimality?},
  author={Belkin, Mikhail and Rakhlin, Alexander and Tsybakov, Alexandre B},
  booktitle={International Conference on Artificial Intelligence and Statistics},
  pages={1611--1619},
  year={2019}
}

@article{belkin2019reconciling,
  title={Reconciling modern machine-learning practice and the classical bias--variance trade-off},
  author={Belkin, Mikhail and Hsu, Daniel and Ma, Siyuan and Mandal, Soumik},
  journal={Proceedings of the National Academy of Sciences},
  volume={116},
  number={32},
  pages={15849--15854},
  year={2019},
  publisher={National Academy of Sciences}
}

@article{belkin2019two,
  title={Two models of double descent for weak features},
  author={Belkin, Mikhail and Hsu, Daniel and Xu, Ji},
  journal={SIAM Journal on Mathematics of Data Science},
  volume={2},
  number={4},
  pages={1167--1180},
  year={2020}
}

@article{hastie2019surprises,
  title={Surprises in high-dimensional ridgeless least squares interpolation},
  author={Hastie, Trevor and Montanari, Andrea and Rosset, Saharon and Tibshirani, Ryan J},
  journal={arXiv preprint arXiv:1903.08560},
  year={2019}
}

@article{liang2020just,
  title={Just interpolate: Kernel ``ridgeless'' regression can generalize},
  author={Liang, Tengyuan and Rakhlin, Alexander},
  journal={The Annals of Statistics},
  volume={48},
  number={3},
  pages={1329--1347},
  year={2020}
}

@article{muthukumar2020harmless,
  title={Harmless interpolation of noisy data in regression},
  author={Muthukumar, Vidya and Vodrahalli, Kailas and Subramanian, Vignesh and Sahai, Anant},
  journal={IEEE Journal on Selected Areas in Information Theory},
  year={2020}
}

@inproceedings{zhang2016understanding,
  title={Understanding deep learning requires rethinking generalization},
  author={Zhang, Chiyuan and Bengio, Samy and Hardt, Moritz and Recht, Benjamin and Vinyals, Oriol},
  booktitle={International Conference on Learning Representations},
  year={2017}
}

@article{candes2018phase,
  title={The phase transition for the existence of the maximum likelihood estimate in high-dimensional logistic regression},
  author={Cand{\`e}s, Emmanuel J and Sur, Pragya},
  journal={The Annals of Statistics},
  volume={48},
  number={1},
  pages={27--42},
  year={2020}
}

@article{mei2019generalization,
  title={The generalization error of random features regression: Precise asymptotics and double descent curve},
  author={Mei, Song and Montanari, Andrea},
  journal={arXiv preprint arXiv:1908.05355},
  year={2019}
}

@article{sur2019modern,
  title={A modern maximum-likelihood theory for high-dimensional logistic regression},
  author={Sur, Pragya and Cand{\`e}s, Emmanuel J},
  journal={Proceedings of the National Academy of Sciences},
  volume={116},
  number={29},
  pages={14516--14525},
  year={2019}
}

@article{bickel2004some,
  title={Some theory for {F}isher's linear discriminant function, `naive {B}ayes', and some alternatives when there are many more variables than observations},
  author={Bickel, Peter J and Levina, Elizaveta},
  journal={Bernoulli},
  volume={10},
  number={6},
  pages={989--1010},
  year={2004}
}

@article{jin2009impossibility,
  title={Impossibility of successful classification when useful features are rare and weak},
  author={Jin, Jiashun},
  journal={Proceedings of the National Academy of Sciences},
  volume={106},
  number={22},
  pages={8859--8864},
  year={2009}
}

@inproceedings{gunasekar2017implicit,
  title={Implicit regularization in matrix factorization},
  author={Gunasekar, Suriya and Woodworth, Blake E and Bhojanapalli, Srinadh and Neyshabur, Behnam and Srebro, Nathna},
  booktitle={Advances in Neural Information Processing Systems},
  pages={6151--6159},
  year={2017}
}

@inproceedings{gunasekar2018implicit,
  title={Implicit bias of gradient descent on linear convolutional networks},
  author={Gunasekar, Suriya and Lee, Jason D and Soudry, Daniel and Srebro, Nathan},
  booktitle={Advances in Neural Information Processing Systems},
  pages={9461--9471},
  year={2018}
}

@inproceedings{gunasekar2018characterizing,
  title={Characterizing Implicit Bias in Terms of Optimization Geometry},
  author={Gunasekar, Suriya and Lee, Jason and Soudry, Daniel and Srebro, Nathan},
  booktitle={International Conference on Machine Learning},
  pages={1832--1841},
  year={2018}
}

@book{anderson2003introduction,
  title={An introduction to multivariate statistical analysis},
  author={Anderson, T.W.},
  series={Wiley Series in Probability and Statistics},
  year={2003},
  publisher={Wiley}
}

@article{cai2011direct,
  title={A direct estimation approach to sparse linear discriminant analysis},
  author={Cai, Tony and Liu, Weidong},
  journal={Journal of the American statistical association},
  volume={106},
  number={496},
  pages={1566--1577},
  year={2011}
}

@article{tony2019high,
  title={High dimensional linear discriminant analysis: optimality, adaptive algorithm and missing data},
  author={Cai, Tony and Zhang, Linjun},
  journal={Journal of the Royal Statistical Society: Series B (Statistical Methodology)},
  volume={81},
  number={4},
  pages={675--705},
  year={2019}
}

@article{kearns1994toward,
  title={Toward efficient agnostic learning},
  author={Kearns, Michael J and Schapire, Robert E and Sellie, Linda M},
  journal={Machine Learning},
  volume={17},
  number={2-3},
  pages={115--141},
  year={1994}
}

@article{klivans2009learning,
  title={Learning halfspaces with malicious noise},
  author={Klivans, Adam R and Long, Philip M and Servedio, Rocco A},
  journal={Journal of Machine Learning Research},
  volume={10},
  number={Dec},
  pages={2715--2740},
  year={2009}
}

@article{awasthi2017power,
  title={The power of localization for efficiently learning linear separators with noise},
  author={Awasthi, Pranjal and Balcan, Maria Florina and Long, Philip M},
  journal={Journal of the ACM (JACM)},
  volume={63},
  number={6},
  pages={1--27},
  year={2017}
}

@inproceedings{ng2002discriminative,
  title={On discriminative vs. generative classifiers: A comparison of logistic regression and naive {B}ayes},
  author={Ng, Andrew Y and Jordan, Michael I},
  booktitle={Advances in Neural Information Processing Systems},
  pages={841--848},
  year={2002}
}

@article{berend2015finite,
  title={A finite sample analysis of the {N}aive {B}ayes classifier.},
  author={Berend, Daniel and Kontorovich, Aryeh},
  journal={Journal of Machine Learning Research},
  volume={16},
  number={44},
  pages={1519--1545},
  year={2015}
}

@inproceedings{kleindessner2018crowdsourcing,
  title={Crowdsourcing with arbitrary adversaries},
  author={Kleindessner, Matth{\"a}us and Awasthi, Pranjal},
  booktitle={International Conference on Machine Learning},
  pages={2708--2717},
  year={2018}
}

@article{kobak2018optimal,
  title={The optimal ridge penalty for real-world high-dimensional data can be zero or negative due to the implicit ridge regularization},
  author={Kobak, Dmitry and Lomond, Jonathan and Sanchez, Benoit},
  journal={Journal of Machine Learning Research},
  volume={21},
  number={169},
  pages={1--16},
  year={2020}
}

@article{fan2008high,
  title={High dimensional classification using features annealed independence rules},
  author={Fan, Jianqing and Fan, Yingying},
  journal={The Annals of Statistics},
  volume={36},
  number={6},
  pages={2605},
  year={2008}
}

@book{hastie2009elements,
  title={The elements of statistical learning: data mining, inference, and prediction},
  author={Hastie, Trevor and Tibshirani, Robert and Friedman, Jerome},
  year={2009},
  publisher={Springer Science \& Business Media}
}

@article{dobriban2018high,
  title={High-dimensional asymptotics of prediction: Ridge regression and classification},
  author={Dobriban, Edgar and Wager, Stefan},
  journal={The Annals of Statistics},
  volume={46},
  number={1},
  pages={247--279},
  year={2018}
}

@inproceedings{li2015fast,
  title={Fast classification rates for high-dimensional {G}aussian generative models},
  author={Li, Tianyang and Prasad, Adarsh and Ravikumar, Pradeep K},
  booktitle={Advances in Neural Information Processing Systems},
  pages={1054--1062},
  year={2015}
}

@article{li2017l1,
  title={{L1} least squares for sparse high-dimensional {LDA}},
  author={Li, Yanfang and Jia, Jinzhu},
  journal={Electronic Journal of Statistics},
  volume={11},
  number={1},
  pages={2499--2518},
  year={2017}
}

@article{neyshabur2014search,
  title={In search of the real inductive bias: On the role of implicit regularization in deep learning},
  author={Neyshabur, Behnam and Tomioka, Ryota and Srebro, Nathan},
  journal={arXiv preprint arXiv:1412.6614},
  year={2014}
}

@article{neyshabur2017implicit,
  title={Implicit regularization in deep learning},
  author={Neyshabur, Behnam},
  journal={arXiv preprint arXiv:1709.01953},
  year={2017}
}

@book{cormen2009introduction,
  title={Introduction to algorithms},
  author={Cormen, Thomas H and Leiserson, Charles E and Rivest, Ronald L and Stein, Clifford},
  year={2009},
  publisher={MIT Press}
}

@inproceedings{nagarajan2019uniform,
  title={Uniform convergence may be unable to explain generalization in deep learning},
  author={Nagarajan, Vaishnavh and Kolter, J Zico},
  booktitle={Advances in Neural Information Processing Systems},
  pages={11615--11626},
  year={2019}
}

@article{muthukumar2020classification,
  title={Classification vs regression in overparameterized regimes: Does the loss function matter?},
  author={Muthukumar, Vidya and Narang, Adhyyan and Subramanian, Vignesh and Belkin, Mikhail and Hsu, Daniel and Sahai, Anant},
  journal={arXiv preprint arXiv:2005.08054},
  year={2020}
}

@article{cover1965geometrical,
  title={Geometrical and statistical properties of systems of linear inequalities with applications in pattern recognition},
  author={Cover, Thomas M},
  journal={IEEE transactions on electronic computers},
  number={3},
  pages={326--334},
  year={1965}
}

@book{vapnik1982estimation,
  title={Estimation of dependences based on empirical data},
  author={Vapnik, Vladimir},
  year={1982},
  publisher={Springer-Verlag}
}

@article{shawe1998structural,
  title={Structural risk minimization over data-dependent hierarchies},
  author={Shawe-Taylor, John and Bartlett, Peter L and Williamson, Robert C and Anthony, Martin},
  journal={IEEE transactions on Information Theory},
  volume={44},
  number={5},
  pages={1926--1940},
  year={1998}
}

@book{vapnik1995nature,
  title={The nature of statistical learning theory},
  author={Vapnik, Vladimir N},
  year={1995},
  publisher={Springer-Verlag}
}

@inproceedings{van2015learning,
  title={Learning with symmetric label noise: The importance of being unhinged},
  author={Van Rooyen, Brendan and Menon, Aditya and Williamson, Robert C},
  booktitle={Advances in Neural Information Processing Systems},
  pages={10--18},
  year={2015}
}

@article{kearns1998efficient,
  title={Efficient noise-tolerant learning from statistical queries},
  author={Kearns, Michael},
  journal={Journal of the ACM (JACM)},
  volume={45},
  number={6},
  pages={983--1006},
  year={1998}
}

@article{angluin1988learning,
  title={Learning from noisy examples},
  author={Angluin, Dana and Laird, Philip},
  journal={Machine Learning},
  volume={2},
  number={4},
  pages={343--370},
  year={1988}
}

@inproceedings{natarajan2013learning,
  title={Learning with noisy labels},
  author={Natarajan, Nagarajan and Dhillon, Inderjit S and Ravikumar, Pradeep K and Tewari, Ambuj},
  booktitle={Advances in neural information processing systems},
  pages={1196--1204},
  year={2013}
}

@article{cesa1999sample,
  title={Sample-efficient strategies for learning in the presence of noise},
  author={Cesa-Bianchi, Nicolo and Dichterman, Eli and Fischer, Paul and Shamir, Eli and Simon, Hans Ulrich},
  journal={Journal of the ACM (JACM)},
  volume={46},
  number={5},
  pages={684--719},
  year={1999}
}

@inproceedings{servedio1999pac,
  title={On {PAC} learning using {W}innow, {P}erceptron, and a {P}erceptron-like algorithm},
  author={Servedio, Rocco A},
  booktitle={Conference on Computational Learning Theory},
  pages={296--307},
  year={1999}
}

@article{kalai2005boosting,
  title={Boosting in the presence of noise},
  author={Kalai, Adam Tauman and Servedio, Rocco A},
  journal={Journal of Computer and System Sciences},
  volume={71},
  number={3},
  pages={266--290},
  year={2005}
}

@article{long2010random,
  title={Random classification noise defeats all convex potential boosters},
  author={Long, Philip M and Servedio, Rocco A},
  journal={Machine learning},
  volume={78},
  number={3},
  pages={287--304},
  year={2010}
}

@inproceedings{scott2013classification,
  title={Classification with asymmetric label noise: Consistency and maximal denoising},
  author={Scott, Clayton and Blanchard, Gilles and Handy, Gregory},
  booktitle={Conference on Learning Theory},
  pages={489--511},
  year={2013}
}

@article{blanchard2008statistical,
  title={Statistical performance of support vector machines},
  author={Blanchard, Gilles and Bousquet, Olivier and Massart, Pascal},
  journal={The Annals of Statistics},
  volume={36},
  number={2},
  pages={489--531},
  year={2008}
}

@article{massart2006risk,
  title={Risk bounds for statistical learning},
  author={Massart, Pascal and N{\'e}d{\'e}lec, {\'E}lodie},
  journal={The Annals of Statistics},
  volume={34},
  number={5},
  pages={2326--2366},
  year={2006}
}

@inproceedings{awasthi2015efficient,
  title={Efficient learning of linear separators under bounded noise},
  author={Awasthi, Pranjal and Balcan, Maria-Florina and Haghtalab, Nika and Urner, Ruth},
  booktitle={Conference on Learning Theory},
  pages={167--190},
  year={2015}
}

@inproceedings{diakonikolas2019distribution,
  title={Distribution-independent {PAC} learning of halfspaces with {M}assart noise},
  author={Diakonikolas, Ilias and Gouleakis, Themis and Tzamos, Christos},
  booktitle={Advances in Neural Information Processing Systems},
  pages={4749--4760},
  year={2019}
}

@article{cannings2020classification,
  title={Classification with imperfect training labels},
  author={Cannings, Timothy I and Fan, Yingying and Samworth, Richard J},
  journal={Biometrika},
  volume={107},
  number={2},
  pages={311--330},
  year={2020}
}

@inproceedings{DBLP:conf/icassp/DengKT20,
  author    = {Zeyu Deng and
               Abla Kammoun and
               Christos Thrampoulidis},
  title     = {A Model of Double Descent for High-Dimensional Logistic Regression},
  booktitle = {ICASSP},
  pages     = {4267--4271},
  year      = {2020},
}

@inproceedings{liang2020MultipleDescent,
  title={On the multiple descent of minimum-norm interpolants and restricted lower isometry of kernels},
  author={Liang, Tengyuan and Rakhlin, Alexander and Zhai, Xiyu},
  booktitle={Conference on Learning Theory},
  pages={2683--2711},
  year={2020}
}

@article{lugosi1992learning,
  title={Learning with an unreliable teacher},
  author={Lugosi, Gabor},
  journal={Pattern Recognition},
  volume={25},
  number={1},
  pages={79--87},
  year={1992}
}

@article{liang2020precise,
  title={A precise high-dimensional asymptotic theory for boosting and min-$\ell_1$-norm interpolated classifiers},
  author={Liang, Tengyuan and Sur, Pragya},
  journal={arXiv preprint arXiv:2002.01586},
  year={2020}
}

@inproceedings{DBLP:conf/nips/Talwar20,
 author = {Talwar, Kunal},
 booktitle = {Advances in Neural Information Processing Systems},
 pages = {4223--4234},
 title = {On the Error Resistance of Hinge-Loss Minimization},
 volume = {33},
 year = {2020}
}

@article{chatterji2020finite,
  title={Finite-sample analysis of interpolating linear classifiers in the overparameterized regime},
  author={Chatterji, Niladri S and Long, Philip M},
  journal={arXiv preprint arXiv:2004.12019},
  year={2020}
}

@article{liang2021interpolating,
  title={Interpolating Classifiers Make Few Mistakes},
  author={Liang, Tengyuan and Recht, Benjamin},
  journal={arXiv preprint arXiv:2101.11815},
  year={2021}
}

@article{wang2020benign,
  title={Benign Overfitting in Binary Classification of {G}aussian Mixtures},
  author={Wang, Ke and Thrampoulidis, Christos},
  journal={arXiv preprint arXiv:2011.09148},
  year={2020}
}

@book{anthony2009neural,
  title={Neural network learning: Theoretical foundations},
  author={Anthony, Martin and Bartlett, Peter L},
  year={2009},
  publisher={Cambridge University Press}
}

@incollection{pemmaraju2001equitable,
  title={Equitable coloring extends {C}hernoff-{H}oeffding bounds},
  author={Pemmaraju, Sriram V},
  booktitle={Approximation, Randomization, and Combinatorial Optimization: Algorithms and Techniques},
  pages={285--296},
  year={2001},
  publisher={Springer}
}

@article{dubhashi1998balls,
  title={Balls and Bins: A Study in Negative Dependence},
  author={Dubhashi, Devdatt and Ranjan, Desh},
  journal={Random Structures \& Algorithms},
  volume={13},
  number={5},
  pages={99--124},
  year={1998}
}

@article{schmidt1995chernoff,
  title={Chernoff--{H}oeffding bounds for applications with limited independence},
  author={Schmidt, Jeanette P and Siegel, Alan and Srinivasan, Aravind},
  journal={SIAM Journal on Discrete Mathematics},
  volume={8},
  number={2},
  pages={223--250},
  year={1995},
  publisher={SIAM}
}

@inproceedings{woodworth2020kernel,
  title={Kernel and rich regimes in overparametrized models},
  author={Woodworth, Blake and Gunasekar, Suriya and Lee, Jason D and Moroshko, Edward and Savarese, Pedro and Golan, Itay and Soudry, Daniel and Srebro, Nathan},
  booktitle={Conference on Learning Theory},
  pages={3635--3673},
  year={2020}
}

@article{azulay2021implicit,
  title={On the Implicit Bias of Initialization Shape: Beyond Infinitesimal Mirror Descent},
  author={Azulay, Shahar and Moroshko, Edward and Nacson, Mor Shpigel and Woodworth, Blake and Srebro, Nathan and Globerson, Amir and Soudry, Daniel},
  journal={arXiv preprint arXiv:2102.09769},
  year={2021}
}
\end{document}